\documentclass[conference]{IEEEtran}
\usepackage{times}

\usepackage{amsmath} 
\usepackage{amssymb} 
\usepackage{amsthm} 
\usepackage{amsfonts} 

\usepackage[caption=false]{subfig}
\usepackage[pdftex]{graphicx}

\DeclareGraphicsExtensions{.pdf, .png, .jpg}

\usepackage{algorithm}
\usepackage[noend]{algpseudocode}
\algrenewcommand\algorithmicindent{0.7em}%

\usepackage{algpseudocode}

\newtheorem{problem}{Problem}
\newtheorem{theorem}{Theorem}
\newtheorem{remark}{Remark}
\newtheorem{lemma}{Lemma}

\newtheorem{example}{Example}

\DeclareMathOperator*{\argmin}{arg\,min} 

\newcommand{\prm}{PRM$^*$}

\usepackage[numbers]{natbib}
\usepackage{multicol}
\usepackage[bookmarks=true]{hyperref}



\begin{document}

\title{
Efficient Optimal Planning in  non-FIFO Time-Dependent Flow Fields
}




\author{James Ju Heon Lee$^1$, Chanyeol Yoo$^1$, Stuart Anstee$^2$ and Robert Fitch$^1$\\
\small $^1$University of Technology Sydney, Ultimo, NSW 2006, Australia\\
\small Email:  {\tt juheon.lee@student.uts.edu.au, \{chanyeol.yoo, robert.fitch\}@uts.edu.au}\\
\small $^2$Defence Science and Technology Group, Department of Defence, Australia\\
\small Email: {\tt stuart.anstee@dst.defence.gov.au}}




%

\maketitle

\begin{abstract}
We propose an algorithm for solving the time-dependent shortest path problem in flow fields where the FIFO (first-in-first-out) assumption is violated. This problem variant is important for autonomous vehicles in the ocean, for example, that cannot arbitrarily hover in a fixed position and that are strongly influenced by time-varying ocean currents. Although polynomial-time solutions are available for discrete-time problems, the continuous-time non-FIFO case is NP-hard with no known relevant special cases. Our main result is to show that this problem can be solved in polynomial time if the edge travel time functions are piecewise-constant, agreeing with existing worst-case bounds for FIFO problems with restricted slopes. We present a minimum-time algorithm for graphs that allows for paths with finite-length cycles, and then embed this algorithm within an asymptotically optimal sampling-based framework to find time-optimal paths in flows. The algorithm relies on an efficient data structure to represent and manipulate piecewise-constant functions and is straightforward to implement. We illustrate the behaviour of the algorithm in an example based on a common ocean vortex model in addition to simpler graph-based examples.
\end{abstract}

\IEEEpeerreviewmaketitle


\section{Introduction}\label{sec:introduction}
Many minimum-time planning problems in robotics inherently involve time-costs that are non-static. In terms of finding shortest paths on graphs, this means that edge traversal time is not a scalar value, but instead is a function that varies over time. The importance of developing shortest path algorithms for non-static travel times is well recognised, and somewhat surprisingly, has been studied for over 50 years~\cite{Cooke1966}. In comparison to static shortest path problems, progress in developing a theoretical understanding of the \emph{time-dependent shortest path}~(TDSP) problem has proved far more elusive. Our goal is to explore relatively recent theoretical results in an effort to develop practical algorithms for robotics applications.

Our main motivation is planning for robots and vehicles that are influenced by fluid flows, such as those in the ocean and the atmosphere~\cite{yoo2016online}. Planning in ocean currents is important for many applications such as oil and gas exploration~\cite{Russell-Cargill2018}, environmental monitoring~\cite{Rudnick2004} and defence~\cite{Johannsson2010}, with platforms such as underwater gliders, surface vessels, Wave Gliders, and profiling floats. Planning is critical when the maximum vehicle velocity is comparable to the prevailing current~\cite{lee2017energy,cadmus2019streamlines}; success of autonomous navigation is then directly tied to the ability to model~\cite{brian2019online} and exploit current predictions.

Known TDSP solution approaches remain difficult to apply due to the many subtle problem variants~\cite{Dean2004} whose complexity has only recently come to light. Perhaps as a consequence, many published algorithms have no stated performance bounds or, worse, make incorrect claims as noted in~\cite{Foschini2014}. One important property of TDSP problems over graphs is the FIFO (first-in-first-out) property, which essentially states that delaying departure time can never result in earlier arrival. Therefore, in FIFO problems, remaining at any given node is never beneficial. Waiting is critical for optimality in the non-FIFO case, although if arbitrary waiting is permitted then a non-FIFO problem can be transformed into an equivalent FIFO version~\cite{Dean2004}. A second important property lies in characterising the edge travel time function as either discrete time (one scalar value per unit time) or continuous time. The discrete time case (both FIFO and non-FIFO) is known to be polynomial in the number of edges and the length of the time horizon~\cite{Dean2004}. It is natural to exploit this discrete structure by searching the time-expanded graph using common algorithms such as A*. Execution time, however, can quickly become unwieldy for long time horizons, and this effect is especially problematic for slow moving vehicles in the ocean. The continuous time case, again for both FIFO and non-FIFO problems, is non-polynomial in the general case, even for piecewise-linear cost functions~\cite{Foschini2014}.

Whereas it is not possible to avoid the worst case bound in general, it is interesting to consider special cases. One restriction in FIFO problems is to limit the set of possible slopes for the pieces of the piecewise linear functions. This limitation is helpful because it allows for polynomial-time algorithms~\cite{Foschini2014}. 


In this paper, we consider a previously unidentified special case for non-FIFO problems where the edge travel time function is piecewise constant, and show that its worst-case bound is polynomial. We present an algorithm based on a data structure that supports efficient manipulations of the piecewise edge functions. We do not allow arbitrary waiting, which would not be possible in flow fields. Finally, our algorithm outputs solutions from every node and time to the destination, which means that the solution is in the form of a \emph{policy} that can be used for replanning, for example. The piecewise-constant assumption is reasonable in practice because ocean current estimates are typically provided in this form~\cite{chassignet2009us}, similar to upper atmosphere estimates~\cite{stein2015noaa}.

We then show how this solution for TDSP on graphs can be used for planning in flow fields. First, we argue that flow fields such as ocean currents are best described as \emph{asymptotically FIFO}, where the edge functions exhibit FIFO behaviour as edge lengths approach zero, yet practical problem instances are certainly non-FIFO. Then, we integrate the TDSP solution into a sampling based algorithm, using PRM* for illustration, and thus propose an integrated framework for time-dependent flows. We provide examples in 2D flows, but the algorithm can be used unchanged for 3D flows. We also show that the running time of the algorithm is polynomial in the number of nodes and size of edge functions.

This result is significant because it contributes a new theoretical result that provides a practical solution with known performance bounds. It also gives insight into how to model problems in practice.
Since the solution size heavily depends on the size of edge functions, it is desirable to minimise their size.
This modelling is different to how ocean current predictions would naively be used, because it encourages post-processing to merge pieces of individual edge functions that remain constant between successive estimates. 

\subsection{Related work}
Our work is related to the seminal paper by Foschini, Hershberger, and Suri~\cite{Foschini2014}, who showed that the computational complexity of FIFO problems with piecewise linear edge travel time functions is superpolynomial in the number of graph nodes. The authors further showed the existence of polynomial time special cases where the slopes of the travel time functions are restricted. We show that this bound also holds for finding minimum travel time paths in non-FIFO problems with piecewise constant edge functions, and present an algorithm with a straightforward implementation. The main distinction is that shortest paths in non-FIFO problems (that do not allow arbitrary waiting) may include cycles. Our analysis essentially bounds the length of cycles by relating the worst-case cycle time to properties of the edge functions.

A useful categorisation of problem variants along with complexity results is presented by Dean~\cite{Dean2004}. Interestingly, the tightest known worst case bounds for general FIFO and non-FIFO problems are both polynomial (in the number of graph edges and the time horizon) in the discrete time case, and are both superpolynomial in continuous time.

In discrete-time problems, optimal solutions may be found by searching the \emph{time-expanded graph}~\cite{Gunturi2011,Gunturi2015}, where nodes are duplicated per unit time. Dreyfus~\cite{Dreyfus1969} first applied Dijkstra's algorithm in this case. More recent work is based on A*~\cite{gliderPP_CTS_A*_2010,singleMulti_gliderPlanning_2011,gliderPP_iterative_ICRA2011} with performance improvements using adaptive discretisation~\cite{aSTS_Kularatne2018}, precomputed heuristics~\cite{Kontogiannis2016}, and bidirectional search~\cite{Demiryurek2011}.  Other approaches are based on time-aggregated graphs\cite{TimeAggregate_Betsy2008}, where edge functions are represented as time series.

Continuous-time problems are often motivated by path planning for underwater or surface vehicles in the ocean. A level set approach is presented by Lolla \emph{et al.}~\cite{Lolla2012,Lolla2014}. Recent work by Liu and Sukhatme~\cite{TVMDP_Liu2018} formulates the problem as a time-varying Markov decision process. Since these methods assume general edge functions, known complexity results suggest that their worst-case running time is non-polynomial. Our approach finds time-optimal paths in flow fields by embedding our efficient TDSP solution within a sampling-based framework that is asymptotically optimal.

\section{Background and Problem Formulation} \label{sec:problem_formulation}

\subsection{Time-dependent directed graph}

We consider a directed graph $G = (S,E)$ that consists of a finite set of states~$S$ and edges~$(s, s') \in E$ where~$s, s' \in S$. The set of immediately reachable states from state~$s$ is denoted as~$S_s \subseteq S$. A set of goal states is denoted as~$S_g \subset S$ where~$|S_g| \geq 1$. We restrict consideration to graphs in which goal states are reachable from the initial state.

We define an $(n+1)$-length path~$\Gamma$ within~$G$ as a sequence of states~$\Gamma = s_0 s_1 \cdots s_n$, where~$s_k \in S$ and $(s_{k}, s_{k+1}) \in E$ for all~$k$. The final state~$s_n$ is one of the goal states~$S_g$ while others are not. We denote~$\Gamma_{k}$ as the prefix of~$\Gamma$ up to the $k$-th state in the path (i.e., $\Gamma_{k} = s_0 s_1 \cdots s_k$).
Given an edge~$(s, s')$, we define \emph{edge travel time}~$C_{s s'}(t)$, or simply \emph{edge time}, as the time it takes to traverse from state~$s$ to~$s'$, having departed state~$s$ at time~$t$. Without loss of generality, path traversal begins no earlier than $t=0$, and edge time~$C_{s s'}(t)$ is $\infty$ for all~$t \leq 0$.

\subsection{Arrival and travel time}

\emph{Arrival time}~$a_{\Gamma}(t)$~\cite{Dean2004} is defined as the time of arrival at the final state~$s_n$ in path~$\Gamma$, having departed from state~$s$ at time~$t$. Formally,
\begin{equation}  \label{eqn:arrival_time}
    \begin{split}
        a_{\Gamma}(t) &= a_{s_{n-2}, s_{n-1}} ( a_{\Gamma_{n-2}}(t) ).
    \end{split}
\end{equation}

Similarly,  \emph{travel time}~$T_{\Gamma}(t)$ is defined as the time it takes to complete the path~$\Gamma$, having departed from the initial state~$s_0$ at time~$t$. Formally,
\begin{equation} \label{eqn:travel_time1}
    T_{\Gamma}(t) = a_{\Gamma}(t) - t
    .
\end{equation}
Intuitively, travel time begins at departure, whereas arrival time includes time spent awaiting the initial departure. Both arrival and travel times depend on the sequence of edge times, each of which depends on the individual arrival time at each edge.


\subsection{FIFO properties}

The TDSP problem is often solved for minimal arrival time assuming \emph{first-in-first-out} (FIFO) behaviour. A graph exhibits FIFO behaviour if it satisfies~\cite{Dean2004,Foschini2014} $t + C_{s s'}(t) \leq t' + C_{s s'}(t')$ for any edge~$(s, s') \in E$ and~$t \leq t'$.
Intuitively, the arrival time~$a_{\Gamma}(t)$ is non-decreasing with respect to departure time~$t$. 

Under the FIFO condition, optimal solutions exhibit the following properties~\cite{Dean2004}: 1) waiting at any state is not beneficial at any time, 2) optimal paths are acyclic (i.e., they do not revisit states), and 3) any subset of the optimal path is also a shortest path. 


\subsection{Problem formulation}

\begin{figure}
    \centering
    \subfloat[Initial state~$s_0$ and goal state~$s_1$ with time-dependent edge time function~$C$]{\includegraphics[width=0.45\columnwidth]{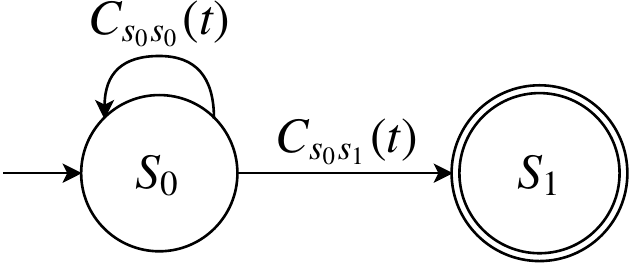} \label{fig:demo_graph}}    \\
    \subfloat[Edge time~$C_{s_0 s_0}(t)$]{\includegraphics[width=0.45\columnwidth]{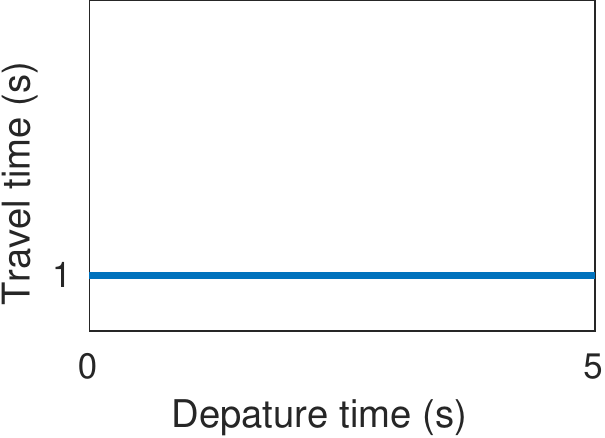} \label{fig:demo_c00}} 
    \subfloat[Edge time~$C_{s_0 s_1}(t)$]{\includegraphics[width=0.45\columnwidth]{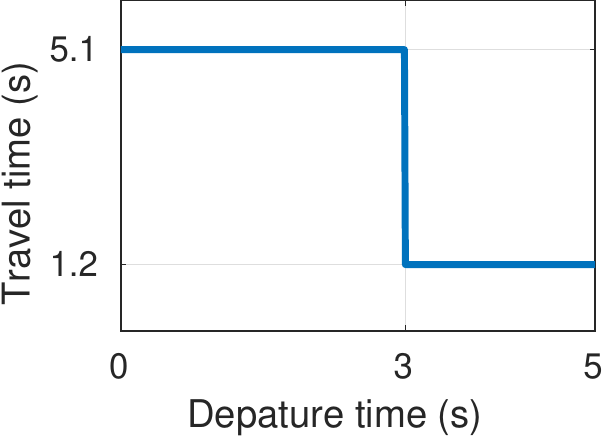} \label{fig:demo_c01}} 
    \caption{Two-state graph example with time-dependent edge times}
    \label{fig:demo}
\end{figure}

In this paper, the objective is to minimise the travel time given initial and goal states. In many practical robotics applications, travel time is more important than arrival time since the constraints such as energy and cost are often tightly coupled to travel time.
With the notation defined, the \emph{time-dependent shortest path} (TDSP) problem for travel time is defined as follows.
\begin{problem} [Minimum travel time problem] \label{prob:tdsp}
    Given a directed graph~$G$ with time-dependent edge time function~$C$, find an optimal path~$\Gamma^*$ and initial departure time~$t^*_0$ that minimises the travel time~$T$, such that
    \begin{equation}
        (\Gamma^*, t^*_0) = \argmin_{\Gamma, t} a_{\Gamma}(t) - t
        .
    \end{equation}
\end{problem}
\noindent We are also interested in solving for the minimum travel time for a set departure time~$t_0$ as follows.
\begin{problem} [Minimum travel time problem given initial departure time]
    \begin{equation}
        \Gamma^* = \argmin_{\Gamma} a_{\Gamma}(t_0) - t_0
        .
    \end{equation}
\end{problem}
\noindent We show later in this paper that the problems are equivalent under the proposed framework. Note that we do not consider waiting at an arbitrary node.

In this paper, we are interested in a non-FIFO graph where the arrival time may not be non-decreasing with respect to departure time.
\begin{example} [Non-FIFO graph] \label{example:nonfifo}
    Consider a graph with time-dependent edge time function as shown in Fig.~\ref{fig:demo}. The edge time~$C_{s_0 s_1}$ is 5.1 seconds for the first 3.5 seconds and then reduces to 1.2 seconds. The self-transition edge time~$C_{s_0 s_0}$ is 1 second for all departure time. 
\end{example}
\noindent Clearly, the graph does not exhibit FIFO behaviour.
Transitioning immediately to goal state~$s_1$ from~$s_0$ takes longer than self-transitioning at state~$s_0$ for a few times before arriving at goal state~$s_1$. 
Note that self-transitioning is different to waiting; the former is defined as an edge transition allowed by a graph, whereas the latter is an arbitrary hold duration which is not defined in a graph.


\section{Piecewise-Constant Representation} \label{sec:evaluation}

We present travel and edge time functions in the form of \emph{piecewise-constant functions} (PF)~\cite{PPF_Yoo2012,yoo2014provably}. We illustrate the form and the relevant operations required to understand the PF-based value iteration where the solution is given as a policy rather than a path.

\subsection{Definition of piecewise-constant function (PF)} \label{subsec:piecewise-constant}
A piecewise-constant function~$f : \mathbb{R} \rightarrow \mathbb{R}$ is defined as a sequence of subdomain and constant pairs, where each subdomain is a time interval in which the value is constant.
We define a piecewise-constant function~$f(t)$ on subdomains~$p^f_i$ indexed backwards in time as follows
\begin{equation} \label{eqn:pf}
    f(t) = \begin{cases}
        v_1^f,& \text{if}~t > p_1^f\\
        \vdots\\
        v_k^f,& \text{if}~p_{k-1}^f \geq t > p_k^f\\
        \vdots\\
        v_n^f,& \text{if}~p_{n-1}^f \geq t > p_n^f\\
        \infty,& \text{else}
        \end{cases} \equiv \begin{cases}
        v_1^f,& \text{if}~ t > p_1^f\\
        \vdots\\
        v_k^f,& \text{ef}~t > p_k^f\\
        \vdots\\
        v_n^f,& \text{ef}~t > p_n^f\\
        \infty,& \text{else}
    \end{cases}
\end{equation}
where~$k\in\mathbb{N}$ and~$p_{k+1}^f < p_k^f\, \forall k \in \mathbb{N}$. We use a short form~`ef' for `else if'. 
Operations over PF are defined in~\cite{PPF_Yoo2012}.
We define an additional operation called~\emph{recursion}. Suppose we have two PFs~$f(t)$ and~$g(t)$. A recursive PF~$f(t + g(t))$ can be derived using conditioning and merging operators such that
\begin{equation}
    \begin{split}
        f(t + g(t)) = &(f(t + p_1^g)~\ominus~p_1^g) \oplus \cdots \oplus(f(t + p_n^g)~\ominus~p_n^g)
    \end{split}
    .
\end{equation}

\subsection{TDSP with piecewise-constant functions}\label{sec:belmans_equation}

We represent the travel time~$T_{\Gamma}(t)$ and edge time~$C_{s s'}$ using PF, as defined in~(\ref{eqn:pf}).
The travel time function in~(\ref{eqn:travel_time1}) can be re-written in an iterative form where~$T^k_s(t)$ is the travel time for $k$ edge transitions starting at state~$s$. Formally,
\begin{equation} \begin{split}
    T^0_{s_0}(t) =& 0    \\
    T^1_{s_0}(t) =& C_{s_0 s_1}(t) \\
    T^2_{s_0}(t) =& C_{s_0 s_1}(t) + C_{s_1 s_2}(t + C_{s_0 s_1}(t))   \\
    T^3_{s_0}(t) =& C_{s_0 s_1}(t) + C_{s_1 s_2}(t + C_{s_0 s_1}(t))\\
         &+ C_{s_2 s_3}(C_{s_0 s_1}(t) + C_{s_1 s_2}(t + C_{s_0 s_1}(t)))   \\
        =& C_{s_0 s_1}(t) + T^2_{s_1}(t + C_{s_0 s_1}(t))
    .
\end{split} \end{equation}
The travel time function can be written recursively as
\begin{equation} \label{eqn:travel_time2}
    T^{k+1}_{s}(t) = C_{s s'}(t) + T^k_{s'}(t + C_{s s'}(t))
    .
\end{equation}
We denote by~$T^*_s(t)$ the converged travel time, where~$T^{k+1}_s(t) = T^{k}_s(t)$ for all~$t \in \mathbb{R}$ and some finite~$k \in \mathbb{Z}$.

We give the solution to the general TDSP problem as a \emph{travel policy}~$\pi_s(t)$ represented as a PF for each state, allowing for time-dependent transitions to other states. Formally,
\begin{equation}
    \pi_s(t) = 
    \begin{cases}
        s^1_s \in S_s &   \text{if}~t > p^1_s   \\
        s^2_s \in S_s &   \text{ef}~t > p^2_s   \\
        \vdots  &   \vdots \\
        s^{n_{\pi_s}}_s \in S_s &   \text{ef}~t > 0 \\
        \emptyset   &   \text{else}
    \end{cases}
    ,
\end{equation}
where~$s^i_s$ is the state visited during the $i$-th subdomain. 
As noted, the edge travel time function~$C_{ss'}(t)$ for edge~$(s, s')$ at departure time~$t$ is a PF, where the last subdomain is~$0$ and the edge time is~$\infty$ for `else' subdomain (i.e., $\forall t \in (\infty, 0]$).

Given a travel policy~$\pi$, the PF-based travel time function is written as
\begin{equation} \label{eqn:eval_travel}
    T^{k+1}_{s}(t) = 
    \begin{cases}
        C_{s s^1_s}(t) + T^k_{s^1_s}(t + C_{s s^1_s}(t)) & \text{if}~t > p^1_s \\
        C_{s s^2_s}(t) + T^k_{s^2_s}(t + C_{s s^2_s}(t)) &   \text{ef}~t > p^2_s   \\
        \vdots & \vdots\\
        C_{s s^{n_{\pi_s}}_s}(t) + T^k_{s^{n_{\pi_s}}_s}(t + C_{s s^{n_{\pi_s}}_s}(t)) &   \text{ef}~t > 0  \\
        \infty &   \text{else}
    \end{cases}
    ,
\end{equation}
where~$p^k_s$ is the beginning of the $k$-th subdomain in policy~$\pi_s$.
The travel time for the goal states~$s \in S_g$ at any iteration~$k$ is~$0$ for all~$t \in \mathbb{R}$.
For simplicity, we use a short form omitting the~`$else$' case, i.e., $t \in (-\infty, 0]$.

\begin{example} [Calculation example] \label{example:pf_calc}
    We illustrate the calculation of travel time using PF-based travel policy for the example in Fig.~\ref{fig:demo_graph}. The edge times in PF form are
    \begin{equation} \begin{split}
        C_{s_0 s_0} &= 
        \begin{cases}
            1.6 &   \text{if}~t > 0
        \end{cases}~\text{and}~
        C_{s_0 s_1} = 
        \begin{cases}
            1.2 &   \text{if}~t > 3.5   \\
            5.1 &   \text{ef}~t > 0
        \end{cases}
    \end{split} \end{equation}
    Let the travel policy for state~$s_0$ be
    \begin{equation} \label{eqn:toy_policy}
        \pi_{s_0}(t) = \begin{cases}
            s_1 &   \text{if}~t > 3 \\
            s_0 &   \text{ef}~t > 0
        \end{cases}
        .
    \end{equation}
    The travel time for~$s_0$ after $k+1$ edge transitions is then
    \begin{equation} \begin{split}
        T^{k+1}_{s_0}(t) &= 
        \begin{cases}
            C_{s_0 s_1}(t) + T^k_{s_1}(t + C_{s_0 s_1}(t)) &   \text{if}~t > 3  \\
            C_{s_0 s_0}(t) + T^k_{s_0}(t + C_{s_0 s_0}(t)) &   \text{ef}~t > 0
        \end{cases} \\
        &=
        \begin{cases}
            \begin{cases}
                1.2 &   \text{if}~t>3.5   \\
                5.1 &   \text{else}
            \end{cases} &   \text{if}~t > 3  \\
            1.6 + T^k_{s_0}(t + 1.6) &   \text{ef}~t > 0
        \end{cases} \\
        &=
        \begin{cases}
            1.2 &   \text{if}~t > 3.5   \\
            5.1 &   \text{ef}~t > 3 \\
            1.6 + T^k_{s_0}(t+1.6)  &   \text{ef}~t > 0
        \end{cases}
        ,
    \end{split} \end{equation}
    The detailed operations over piecewise functions including the~\emph{conditioning} operation are presented in~\cite{PPF_Yoo2012}. Starting from~$T^0_s(t) = 0$, we iteratively calculate the travel time until it converges. The corresponding travel time subject to the heuristic policy (\ref{eqn:toy_policy}), which is not optimal, is
    \begin{equation}
        T_s(t) = 
        \begin{cases}
        	1.2	&	\text{if}~t > 3.5	\\
        	5.1	&	\text{ef}~t > 3.0	\\
        	2.8	&	\text{ef}~t > 1.9	\\
        	6.7	&	\text{ef}~t > 1.4	\\
        	4.4	&	\text{ef}~t > 0.3	\\
        	8.3	&	\text{ef}~t > 0
        \end{cases}
        .
    \end{equation}
    Figure~\ref{fig:trav_and_arrv_eval} illustrates the solution; departing at~$t = 0$, the policy causes self-transit twice at~$s_0$ before moving to~$s_1$. If the initial departure time is greater than $3s$, the policy is to transit to~$s_1$ straight away.
\end{example}

\begin{figure}
    \centering
    \subfloat[Travel time]{\includegraphics[width=0.5\columnwidth]{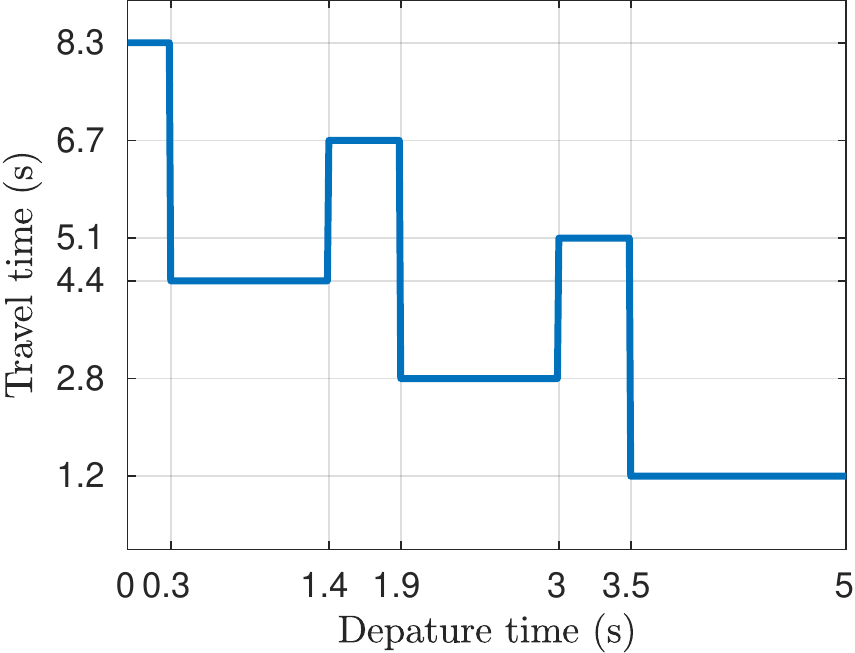} \label{fig:travTime_eval}}
    \subfloat[Arrival Time]{\includegraphics[width=0.5\columnwidth]{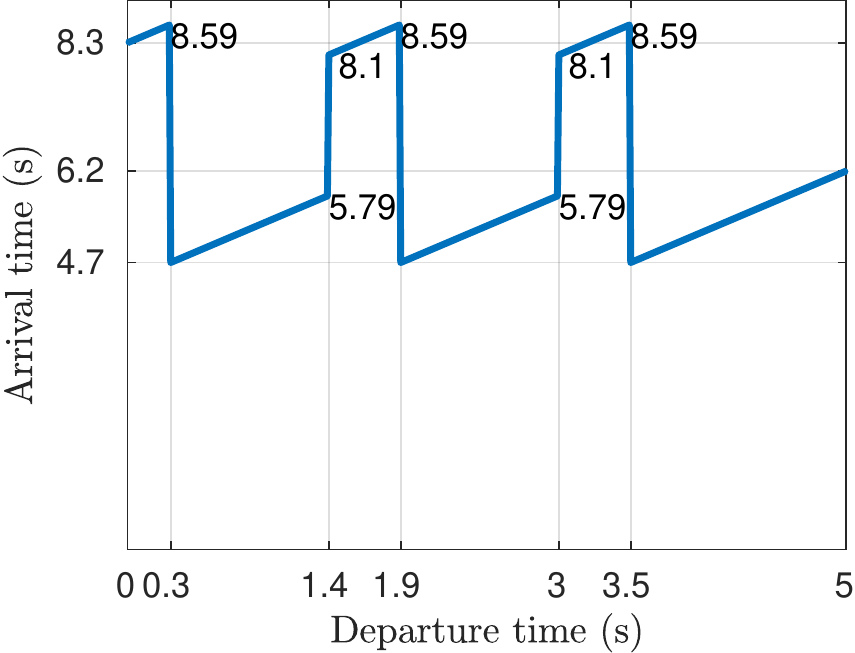} \label{fig:ArrvTime_eval}}
    \caption{Travel and arrival time for travel policy in Example~\ref{example:pf_calc}}
    \label{fig:trav_and_arrv_eval}
\end{figure}



\section{Optimal Travel Time Policy for Non-FIFO TDSP Problems} \label{sec:synthesis}

Given a directed graph~$G$ and time-dependent edge time function~$C_{s s'}(t)$, the optimal travel policy~$\pi^*_s$ to reach the goal state~$s_g \in S_g$ from state~$s$ can be expressed as
\begin{equation} \label{eqn:opt}
    \pi^*_{s}(t) = 
    \begin{cases}
        \vdots & \vdots\\
        \displaystyle \argmin_{s' \in S_s} C_{s s'}(t) + T^*_{s'}(t + C_{s s'}(t)) &   \text{ef}~t > p^{i*}_s   \\
        \vdots & \vdots
    \end{cases}
    .
\end{equation}
In principle, the optimal solution of~(\ref{eqn:opt}) at state~$s$ could be computed iteratively, by finding the optimal next state~$s' \in S_s$ for each time~$t >0$. 
Such an exhaustive approach is impossible, since there are infinitely many subdomains to consider.

We avoid this problem by finding a finite set of subdomains iteratively. We prove later that such set exists for an optimal solution.
Let~$T^{k+1}_{s s'}$ be an \emph{immediate travel time function} in which the transition from state~$s$ to~$s'$ occurs over all time~$t$ after $k$ edge transitions. Formally,
\begin{equation} \label{eqn:immediate_travel}
    T^{k+1}_{s s'} = 
    \begin{cases}
        C_{s s'}(t) + T^{k}_{s'}(t + C_{s s'}(t)) &   \text{if}~t > 0
    \end{cases}
    .
\end{equation}
Let $P^{k}_s$ be the set of subdomains in travel time function~$T^{k}_s$. Such a set for optimal travel time~$T^{k+1*}_s$ is
\begin{equation} \label{eqn:subdomains}
    P^{k+1*}_s = \{ 0 \} \cup \bigcup_{s' \in S_s} P^{k+1*}_{s s'}
    ,
\end{equation}
where subdomain set~$P^{k+1*}_{s s'}$ is from immediate travel function~$T^{k+1*}_{s s'}$ and~$P^0_s = \{0\}$.
The optimisation problem in~(\ref{eqn:opt}) is then solved over a finite set of subdomains~$P^{k+1*}_s$ where~$p^{i*}_s \in P^{k+1*}_s$ in~(\ref{eqn:opt}).
The pseudocode is presented in Alg.~\ref{algo:pseudoCode_optimal_policy}.

\begin{example} [Optimal policy example] \label{example:synthesis}
    The optimal travel time with respect to the graph in Fig.~\ref{fig:demo_graph} is
    \begin{equation}
        \pi^*_{s_0}(t) =
        \begin{cases}
        	s_1	&	\text{if}~t > 3.5	\\
        	s_0 &   \text{ef}~t > 0.3   \\
        	s_1	&	\text{ef}~t > 0
        \end{cases}
        ,~
        T^*_{s_0}(t) = 
        \begin{cases}
        	1.2	&	\text{if}~t > 3.5	\\
        	2.8	&	\text{ef}~t > 1.9	\\
        	4.4	&	\text{ef}~t > 0.3	\\
        	5.1	&	\text{ef}~t > 0
        \end{cases}
    \end{equation}
    The solution converged after 5 iterations.
\end{example}
\noindent Intuitively, the optimal travel policy for Fig.~\ref{fig:demo} should be to self-transit until the time is~$3.5s$, since edge time~$C_{s_0 s_1}$ is high. However, the optimal policy is interestingly non-intuitive. Suppose the policy is to self-transit while ~$0 < t < 3.5$, then the induced path is~$s_0 s_0 s_0 s_1$. The travel time after two self-transitions is~$3.2s$ which causes transition to state~$s_1$ $0.3s$ before the cheap cost is available. Therefore it is less time consuming to transit directly than self-transit. Notably, self-transiting becomes more efficient just after~$t_0 > 0.3$ since the travel time after two edge transitions is enough to transit to~$s_1$ when the cost is cheap.

\begin{algorithm}[t]
	\begin{algorithmic}
	    \State \textbf{Inputs:} Directed graph~$G = (S, E)$, time-dependent edge time function~$C$ and number of edge transitions~$K$
	    \State \textbf{Outputs:} Optimal travel policy~$\pi^{k*}$ and travel time~$T^{k*}$
	    
		\State $T^0_{s} \leftarrow \begin{cases}
                                0,& \text{if } t > 0
                          \end{cases},
                          \forall  s \in S$
		\For {$k \leftarrow 1$~\textbf{to}~$K$}
		    \ForAll{$s\in S\setminus{s_n}$}
		        \State $T^{k+1}_{s s'} \leftarrow \begin{cases} \infty & \text{if}~t>0 \end{cases}, \forall s' \in S$ \Comment{(\ref{eqn:immediate_travel})}
		        \State $P^{k+1*}_s \leftarrow \{0\}$ \Comment{(\ref{eqn:subdomains})}
		        \ForAll {$s' \in S_s$}
    		        \State $T^{k+1}_{s s'} \leftarrow
                        \begin{cases}
                            C_{s s'}(t) + T^{k}_{s'}(t + C_{s s'}(t)) &   \text{if}~t > 0
                        \end{cases}$ \Comment{(\ref{eqn:immediate_travel})}
                    \State $P^{k+1*}_s \leftarrow P^{k+1*}_s \cup P^{k+1*}_{ss'}$ \Comment{(\ref{eqn:subdomains})}
                \EndFor
                \State $T^{k+1*}_{s} \leftarrow 
                \begin{cases}
                    \vdots  &   \vdots  \\
                    \displaystyle \min_{s' \in S_s} T^{k+1}_{s s'}    &   \text{ef}~t > p^{i*}_s    \\
                    \vdots& \vdots
                \end{cases}$, $\forall p^{i*}_s \in P^{k+1*}_{s}$
                \State $\pi^{k+1*}_{s} \leftarrow 
                \begin{cases}
                    \vdots  &   \vdots  \\
                    \displaystyle \argmin_{s' \in S_s} T^{k+1}_{s s'}    &   \text{ef}~t > p^{i*}_s    \\
                    \vdots& \vdots
                \end{cases}$, $\forall p^{i*}_s \in P^{k+1*}_{s}$
                
		    \EndFor
		    
		    \If{$P^{k+1*}_s \equiv P^{k*}_s$ and~$T^{k+1*}_s(t) \equiv T^{k*}_s(t)$, $\forall t \in P^{k*}_s, s \in S$ }
		        \State \textbf{break}
		    \EndIf
		\EndFor
		\Return $\pi^{k}_{s}(t)$ and $T^{*}_{s}(t)$
		\caption{Solving for optimal policy~$\pi^{k*}_{s}(t)$}
		\label{algo:pseudoCode_optimal_policy}
	\end{algorithmic}
\end{algorithm}

\section{Analysis}\label{sec:analysis}

Without loss of generality, we assume~$S_s = S$ for all states~$s \in S \setminus S_g$ (i.e., all states are immediately reachable from any state). 
Given~(\ref{eqn:immediate_travel}) and~(\ref{eqn:subdomains}), the set of subdomains can be written as
\begin{equation}
    P^{k+1}_s = \{0\} \cup \left\{ p \in \bigcup_{s' \in S_s} C_{s s'} \cup \left( P^k_{s'} - C_{s s'} \right) \mid p \geq 0 \right\}
    ,
\end{equation}
where~$A - B = \{a - b \mid \forall a \in A~\text{and}~b \in B\}$ for two sets~$A$ and~$B$. We slightly abuse notation for~$C_{ss'}$ to denote the set of subdomains and constants in the corresponding edge time function. Using a short form~$\frac{A}{0} = \{a \in A \mid a \geq 0\}$, we have
\begin{equation} \begin{split} \label{eqn:analysis_P1}
    P^{k+1}_s = &\{0\} \cup \bigcup_{s'} C_{ss'} \cup \bigcup_{s'} \frac{P^k_{s'} - C_{s s'}}{0}    \\
    = &\{0\} \cup \bigcup_{s'} C_{ss'} \cup \bigcup_{s'} \bigcup_{s''} \frac{C_{s's''} - C_{ss'}}{0}   \\
    &\cup \bigcup_{s'} \bigcup_{s''} \frac{ \frac{P^{k-1}_{s'} - C_{s's''}}{0}  - C_{ss'}}{ 0 }
    .
\end{split} \end{equation}
Since all subdomain sets for edge time function are non-negative by definition, the last term can be written in a form
\begin{equation} \begin{split}
    \frac{\frac{A}{0} - \frac{B}{0}}{0} = \{ a - b \mid a \in A, b \in B, a \geq 0, b \geq 0, a-b \geq 0 \}
    .
\end{split} \end{equation}
Since~$B$ (i.e., $C_{ss'}$) is non-negative, 
\begin{equation}
    \frac{A - B}{0} = \{a - b \mid a \in A, b \in B, a - b \geq 0~\text{and}~b \geq 0\}
    .
\end{equation}
Intuitively, if~$a - b \geq 0$ and~$b \geq 0$, then~$a \geq 0$. Therefore
\begin{equation}
    \frac{\frac{A}{0} - B}{0} \equiv \frac{A - B}{0}
    ,
\end{equation}
if~$B$ is a non-negative set. Then the last term in~(\ref{eqn:analysis_P1}) becomes
\begin{equation}
    \frac{ \frac{P^{k-1} - C_{s's''}}{0}  - C_{ss'}}{ 0 } = \frac{P^{k-1}_{s'} - C_{ss'} - C_{s's''}}{0}
    .
\end{equation}
Therefore, the set of subdomains can be recursively written as
\begin{equation} \begin{split} \label{eqn:analysis_P}
    P^{k+1}_s
    = &\{0\} \cup \bigcup_{s'} C_{ss'} \cup \bigcup_{s'} \bigcup_{s''} \frac{C_{s's''} - C_{ss'}}{0}   \\
    &\cup \cdots \cup 
    \bigcup_{s'} \cdots \bigcup_{s^k} \frac{ C_{s^{k-1}s^k} - \cdots - C_{ss'} }{0}
    .
\end{split} \end{equation}

\begin{lemma} [Finite edge transitions for convergence given infinite length path] \label{lemma:bound}
    Given an arbitrary and infinite length path~$\Gamma$, the set of subdomains converges to a unique and finite set within a finite number of edge transitions.
\end{lemma}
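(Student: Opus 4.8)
The plan is to exhibit a single finite set that contains every subdomain set $P^k_s$ produced by the recursion, to observe that the sequence $(P^k_s)_k$ is non-decreasing, and then to invoke the elementary fact that a monotone sequence of subsets of a finite set is eventually constant; eventual constancy is exactly the asserted convergence to a unique finite limit within finitely many edge transitions.

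First I would make this ambient finite set explicit. Let $\mathcal{B}$ be the (finite) set of time points at which some edge function $C_{ss'}$ changes value, let $t_{\max} = \max \mathcal{B}$, and let $c_{\min}$ be the smallest strictly positive value attained by any edge function --- a positive real, since the graph and each piecewise-constant edge function are finite (barring the degenerate case in which every finite edge time is zero, which is handled identically, with no subtractions). By the recursion \eqref{eqn:analysis_P}, every element of $P^{k+1}_s$ other than $0$ is either a point of $\mathcal{B}$ or, after unwinding the nested $\tfrac{\,\cdot\,}{0}$ operators as the paper already does, a number $\beta - (\gamma_1 + \cdots + \gamma_m)$ with $\beta \in \mathcal{B}$, each $\gamma_i$ a value of some edge function, $m \ge 0$, and the running constraint that the expression stay non-negative. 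Non-negativity forces $\gamma_1 + \cdots + \gamma_m \le \beta \le t_{\max}$, so at most $\lfloor t_{\max}/c_{\min}\rfloor$ of the $\gamma_i$ are nonzero. Hence every subdomain lies in
\[
    \mathcal{P}_\infty \;:=\; \{0\} \;\cup\; \bigl\{\, \beta - \sigma \;\bigm|\; \beta \in \mathcal{B},\ \sigma \in \Sigma,\ \beta - \sigma \ge 0 \,\bigr\},
\]
where $\Sigma$ is the set of all sums of at most $\lfloor t_{\max}/c_{\min}\rfloor$ edge values; $\mathcal{B}$ and $\Sigma$ are finite, so $\mathcal{P}_\infty$ is finite and $P^k_s \subseteq \mathcal{P}_\infty \subseteq [0, t_{\max}]$ for every $k$.

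Monotonicity then drops out of the structure of \eqref{eqn:analysis_P}: writing $P^{k+1}_s = \bigcup_{j=0}^{k} D^j_s$, where $D^j_s$ collects the depth-$j$ contributions (the union, over all $j$-edge paths leaving $s$ --- or, for the fixed infinite path $\Gamma$ of the statement, over the single prefix $\Gamma_j$ --- of the clamped differences $\tfrac{C_{s^{j-1}s^j} - \cdots - C_{ss'}}{0}$), each $D^j_s$ is independent of $k$, so $P^k_s \subseteq P^{k+1}_s$; moreover $D^j_s = \emptyset$ once $j > 1 + \lfloor t_{\max}/c_{\min}\rfloor$, by the bound above. A non-decreasing sequence of subsets of the finite set $\mathcal{P}_\infty$ is eventually constant, so there is a finite $K$ (bounded by $1 + \lfloor t_{\max}/c_{\min}\rfloor$) with $P^k_s = P^K_s$ for all $k \ge K$, and the limit $P^*_s = \bigcup_k P^k_s$ is uniquely determined and finite. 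The same argument runs verbatim along the fixed infinite path $\Gamma$, since each depth-$j$ contribution is then a single set, still independent of $k$ and still inside $\mathcal{P}_\infty$.

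The step I expect to be the main obstacle, and the one I would write most carefully, is pinning down $\mathcal{P}_\infty$: one has to see that non-negativity of edge times simultaneously bounds every subdomain above by $t_{\max}$ and caps the number of nonzero edge values that can be subtracted when forming it, which is precisely what forces the set of attainable subdomain values to be finite; one must also keep straight the paper's overloading of $C_{ss'}$ as either a set of breakpoints or a set of values in \eqref{eqn:analysis_P}. With $\mathcal{P}_\infty$ established, the monotonicity and the ``bounded monotone sequence stabilises'' steps are routine.
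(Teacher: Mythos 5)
Your proof is correct and rests on the same core mechanism as the paper's: every subdomain at depth $j$ in the expansion \eqref{eqn:analysis_P} is a bounded starting value minus $j$ non-negative edge costs, so non-negativity of subdomains caps the depth at which new terms can appear, forcing convergence in finitely many transitions. You flesh this out more rigorously than the paper does --- the explicit finite ambient set $\mathcal{P}_\infty$, the observation that the depth-$j$ contributions are independent of $k$ so the sequence $(P^k_s)$ is monotone, and the handling of zero-cost edges are all left implicit in the paper's three-sentence argument --- but the approach is essentially identical.
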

\begin{proof}
    All subdomains in any edge time function are non-negative by definition. Therefore subdomains in~(\ref{eqn:analysis_P}) monotonically decrease as the number of edge transitions increase. Since all subdomains in travel time function are non-negative, there exists a finite maximum number of edge transitions~$K_{\max}$ before convergence.
\end{proof}

\begin{lemma} [Convergence in finite edge transitions] \label{lemma:conv_kmax}
    Given an arbitrary and infinite length path~$\Gamma$, the worst case number of edge transitions before convergence in subdomain set is
    \begin{equation}
        K_{\max} = \mathrm{ceil} \left( \frac{ \max C }{ \min C } \right)
        ,
    \end{equation}
    where~$C = \bigcup_{s \in S} \bigcup_{s' \in S} C_{ss'} \setminus \{0\}$.
\end{lemma}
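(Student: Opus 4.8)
The plan is to turn the qualitative ``subdomains monotonically decrease'' observation underlying Lemma~\ref{lemma:bound} into a quantitative count. Fix the infinite path $\Gamma$ with $\ell$-th edge function $C_{e_\ell}$, and unroll the travel-time recursion exactly as in~(\ref{eqn:analysis_P}), specialised to this single chain: the subdomain set of $T^k_\Gamma$ is the union, over $\ell = 0,1,\ldots,k$, of a \emph{level-$\ell$ contribution}, namely the clip-to-nonnegative of a set of reals of the form
\begin{equation} \label{eqn:plan_level}
  d - (c_1 + c_2 + \cdots + c_{\ell-1}),
\end{equation}
where $d$ is a subdomain boundary of $C_{e_\ell}$ and each $c_i$ is an edge-traversal constant of $C_{e_i}$ (the level-$0$ contribution being just $\{0\}$). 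Before the main step I would record the two elementary facts that drive everything, both immediate from the definitions of $C$ and of the edge functions: every subdomain boundary and every constant appearing in any edge function is nonnegative and at most $\max C$; and every (positive) edge constant is at least $\min C$, the least nonzero quantity in $C$.

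The core of the proof is a bound on the deepest level that can still introduce a new subdomain. For $\ell \ge 2$, the level-$\ell$ contribution~(\ref{eqn:plan_level}) contains a strictly positive element only if, for some boundary $d$ and constants $c_1,\ldots,c_{\ell-1}$, we have $d > c_1 + \cdots + c_{\ell-1}$; using $d \le \max C$ and $c_i \ge \min C$ this forces $(\ell-1)\min C < \max C$, that is $\ell - 1 < \max C / \min C$, hence $\ell \le \lceil \max C / \min C \rceil = K_{\max}$. Conversely, when $\ell > K_{\max}$ we get $c_1 + \cdots + c_{\ell-1} \ge (\ell-1)\min C \ge \max C \ge d$, so every element of the level-$\ell$ contribution is $\le 0$ and the clip leaves at most $\{0\}$, which is already present at level $0$. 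Therefore no level beyond $K_{\max}$ ever contributes a new subdomain.

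It remains to assemble this. Each additional edge transition exposes exactly one further level in the subdomain set of $T^k_\Gamma$; once $k \ge K_{\max}$ every contributing level is already present, so the subdomain set is stationary from that point on, i.e.\ convergence occurs within $K_{\max}$ edge transitions. Since the quantities in~(\ref{eqn:plan_level}) are drawn from the global sets whose extrema define $K_{\max}$, the bound holds uniformly over all paths; and because a level-$\ell$ contribution can be genuinely nonempty for every $\ell \le K_{\max}$ (take $\ell-1$ edges of travel time $\min C$ followed by an edge with a boundary just above $(\ell-1)\min C$), $K_{\max}$ is the worst case, not merely an upper bound.

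I expect the only real obstacle to be bookkeeping: one must keep straight that in~(\ref{eqn:analysis_P}) the quantity appearing with a $+$ sign is a subdomain boundary (bounded above by $\max C$) while the quantities appearing with $-$ signs are edge constants (each bounded below by $\min C$) — this asymmetry, which the paper's notational abuse for $C_{ss'}$ partially hides, is exactly what makes the count close with a ceiling rather than a floor-plus-one. A minor point is degenerate edges with a zero constant: such an edge subtracts nothing in~(\ref{eqn:plan_level}) and can simply be dropped from the count, leaving the bound $K_{\max}$ intact.
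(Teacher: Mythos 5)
Your proposal is correct and follows essentially the same route as the paper's (much terser) proof: unroll the recursion in~(\ref{eqn:analysis_P}), observe that the deepest term has the form of a single boundary minus a sum of edge constants, and bound it by $\max C - K_{\max}\cdot\min C \le 0$ to get $K_{\max} = \mathrm{ceil}(\max C/\min C)$. Your version adds useful detail the paper omits --- the explicit boundary-versus-constant asymmetry hidden by the notational abuse for $C_{ss'}$, the handling of zero constants, and a tightness construction showing the bound is achieved --- but the underlying argument is the same.
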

\begin{proof}
    The worst-case number of edge transition~$K_{\max}$ is the last edge transition before the subdomain set in $K_{\max}$-th term becomes empty in~(\ref{eqn:analysis_P}) since subdomains are non-negative. The longest such term is~$\max C - \sum_{k=1}^{K_{\max}} \min C$.
\end{proof}

\begin{remark} [Convergence and cyclic paths] \label{remark:cyclic}
    The length of an optimal path that may include cycles is bounded by a finite number of edge transitions found in Lemma~\ref{lemma:conv_kmax}.
\end{remark}


\begin{theorem} [Convergence in optimal algorithm]
    The optimal algorithm in Alg.~\ref{algo:pseudoCode_optimal_policy} converges in a finite time~$K_{\max}$ as shown in Lemma~\ref{lemma:conv_kmax}.
\end{theorem}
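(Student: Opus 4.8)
The plan is to reduce the statement to the two convergence facts already established. Alg.~\ref{algo:pseudoCode_optimal_policy} halts at an iteration~$k$ precisely when the subdomain set is unchanged ($P^{k+1*}_s \equiv P^{k*}_s$) \emph{and} the travel time is unchanged on that set ($T^{k+1*}_s(t) \equiv T^{k*}_s(t)$ for all $t \in P^{k*}_s$ and all $s$). So it suffices to show that both conditions hold no later than $k = K_{\max}$, with $K_{\max} = \mathrm{ceil}(\max C / \min C)$ as in Lemma~\ref{lemma:conv_kmax}. The argument is then essentially: subdomain stabilisation comes from the closed form~(\ref{eqn:analysis_P}) together with Lemma~\ref{lemma:bound} and Lemma~\ref{lemma:conv_kmax}, and value stabilisation comes from monotonicity of the iteration~(\ref{eqn:travel_time2}) together with Remark~\ref{remark:cyclic}.

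For the subdomain condition I would note that the subdomains appearing in $T^{k+1*}_s$ are contained in the set described by~(\ref{eqn:analysis_P}), and then observe that every nested-union term of~(\ref{eqn:analysis_P}) indexed by more than $K_{\max}$ edge transitions is empty: a generic element is a leading edge time (at most $\max C$) minus a sum of edge times (each at least $\min C$), so after $K_{\max}$ subtractions it is negative and is removed by the $\geq 0$ truncation in~(\ref{eqn:analysis_P}) --- this is exactly the counting carried out in the proof of Lemma~\ref{lemma:conv_kmax}. Since Lemma~\ref{lemma:bound} tells us the subdomains only ever shift downward and hence converge, this gives $P^{k+1*}_s = P^{k*}_s$ for all $s$ once $k \geq K_{\max}$.

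For the travel-time condition I would use monotonicity of the value iteration: any path using at most $k$ edge transitions is also a path using at most $k+1$, so $T^{k+1*}_s(t) \leq T^{k*}_s(t)$ pointwise and the sequence $\{T^{k*}_s\}_k$ is non-increasing and bounded below by~$0$. By Remark~\ref{remark:cyclic}, an optimal path --- including one that exploits finite-length cycles --- uses at most $K_{\max}$ edge transitions, so $T^{K_{\max}*}_s = T^{*}_s$ and the sequence is constant from $k = K_{\max}$ onward; in particular $T^{k+1*}_s(t) \equiv T^{k*}_s(t)$ there. Both guards of Alg.~\ref{algo:pseudoCode_optimal_policy} are therefore met by iteration $K_{\max}$, and since each iteration performs only finitely many PF recursions, conditionings, and merges over the finite state set, the procedure terminates in finite time.

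The main obstacle is the appeal to Remark~\ref{remark:cyclic}: one must know the optimum is \emph{attained} by iteration $K_{\max}$, not merely approached in the limit. If this needs to be argued directly I would mirror the subdomain counting --- fix a converged subdomain and a transition sequence realising $T^{*}_s$ on it; if that sequence were longer than $K_{\max}$, then since each edge consumes at least $\min C$ of elapsed time while the breakpoints of the edge functions span a range of at most $\max C$, the induced arrival times would have to re-enter an already-visited subdomain configuration, so the intervening cyclic segment could be excised without increasing travel time, contradicting minimality of the length. This establishes the bound underlying Remark~\ref{remark:cyclic} and closes the proof.
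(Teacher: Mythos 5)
Your proof follows the same core route as the paper's, which simply invokes Lemmas~\ref{lemma:bound} and~\ref{lemma:conv_kmax} to obtain a unique finite subdomain set and then asserts in one sentence that optimising over that finite set terminates; your subdomain half is exactly that argument, spelled out through the counting in~(\ref{eqn:analysis_P}). Where you add genuine content is the value-stabilisation half: the paper never argues that $T^{k*}_s$ stops changing once the subdomains do, whereas you supply a monotone-convergence argument and the cycle-excision step needed to show the optimum is \emph{attained} by iteration $K_{\max}$ rather than merely approached --- this is the substance behind Remark~\ref{remark:cyclic}, which the paper states without proof, so your version is strictly more complete. One correction, though: Alg.~\ref{algo:pseudoCode_optimal_policy} initialises $T^0_s = 0$ for every state, so the iterates are not the minimum over paths of at most $k$ transitions, and the sequence is monotone non-\emph{decreasing} (already $T^1_s = \min_{s'} C_{ss'}(t) > 0 = T^0_s$), not non-increasing as you claim; your stated direction would require initialising non-goal states to $\infty$. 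The argument survives with the direction flipped --- the iterates are monotone and bounded above by the cost of any fixed goal-reaching path, hence convergent --- but the step as written is false and should be repaired before the rest goes through.
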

\begin{proof}
    By Lemma~\ref{lemma:bound} and~\ref{lemma:conv_kmax}, there exists a unique and finite set of subdomains for travel time functions. Since the optimisation problem is to find optimal travel policy for each subdomain, the problem is then solved in a finite number of iterations.
\end{proof}

\begin{theorem} [Time complexity] \label{theorem:complexity}
    The overall time complexity for Alg.~\ref{algo:pseudoCode_optimal_policy} is~$\mathcal{O}((|S| \cdot |C_m|)^{k+2})$, where~$|S|$ is the number of states in graph~$G$, $|C_m|$ is the maximum number of subdomains over a set of edge time functions and~$k$ is the number of edge transitions.
\end{theorem}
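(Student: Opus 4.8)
The plan is to trace how the number of subdomains of the piecewise-constant functions (PFs) grows across the $K$ iterations of Alg.~\ref{algo:pseudoCode_optimal_policy}, charge the cost of each PF operation to that size, and then multiply by the number of states processed per iteration and the number of iterations.

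First I would bound the size of the travel time functions. Writing $N_k = \max_{s \in S}|P^k_s|$, the identities~(\ref{eqn:immediate_travel}) and~(\ref{eqn:subdomains}) (equivalently the unrolled form~(\ref{eqn:analysis_P})) give the recurrence $N_{k+1} \le 1 + |S|\,|C_m| + |S|\,|C_m|\,N_k$: forming $P^{k+1}_s$ from each successor $s'$ adjoins the $\le |C_m|$ breakpoints of $C_{ss'}$ together with the $\le |C_m|\cdot N_k$ shifted breakpoints of $P^k_{s'}$ (since the recursion operator composes PFs of sizes $m$ and $n$ into one of size $\mathcal{O}(mn)$), and there are $\le |S|$ successors. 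With $N_0 = 1$ this unrolls to $N_k = \mathcal{O}((|S|\,|C_m|)^k)$, which also re-derives, more explicitly, the finiteness established in Lemma~\ref{lemma:bound} and Lemma~\ref{lemma:conv_kmax}.

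Next I would bound the work of one pass of the inner loop. For a fixed $s$ and successor $s'$, computing $T^{k+1}_{ss'}(t) = C_{ss'}(t) + T^k_{s'}(t + C_{ss'}(t))$ applies the recursion operator to PFs of sizes $\mathcal{O}(|C_m|)$ and $\mathcal{O}(N_k)$, producing a PF of size $\mathcal{O}(|C_m|\,N_k)$ in time $\mathcal{O}(|C_m|\,N_k)$ up to a logarithmic factor for keeping breakpoints sorted; the subsequent pointwise addition of $C_{ss'}$ does not change this order. Taking the union in~(\ref{eqn:subdomains}) and the pointwise $\min$/$\argmin$ over the $\le |S|$ successors then costs $\mathcal{O}(|S|\,|C_m|\,N_k) = \mathcal{O}(N_{k+1})$ per state, and multiplying by the $|S|$ states gives a per-iteration cost $\mathcal{O}(|S|^2\,|C_m|\,N_k) = \mathcal{O}(|S|\,(|S|\,|C_m|)^{k+1})$. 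Finally I would sum over iterations: Lemma~\ref{lemma:conv_kmax} caps the loop at $K_{\max}$ passes, and since the per-iteration cost grows geometrically in the iteration index the sum is dominated by the last pass, so the total is $\mathcal{O}(|S|\,(|S|\,|C_m|)^{k+1}) = \mathcal{O}((|S|\,|C_m|)^{k+2})$, the two spare factors of $|S|\,|C_m|$ absorbing the leading $|S|$, the iteration count, and the logarithmic bookkeeping.

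I expect the main obstacle to be the accounting in the middle step: pinning down the exact worst-case blow-up of the recursion operator and of the merge and $\min$ operations on PFs (ruling out any hidden super-linear factor in $N_k$), and checking that the iteration count $K_{\max}$ and the per-iteration $|S|$-overhead are genuinely subsumed by the two extra powers, so that the stated exponent $k+2$ — rather than something larger — is what comes out.
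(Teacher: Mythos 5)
Your proposal is correct and follows essentially the same route as the paper: both hinge on the recurrence $|P^{k+1}_s| \leq 1 + |S|\,|C_m| + |S|\,|C_m|\,|P^k_{s'}|$ derived from~(\ref{eqn:immediate_travel}) and~(\ref{eqn:subdomains}), unrolled as a geometric sum to get $\mathcal{O}((|S|\cdot|C_m|)^{k+2})$. You are somewhat more explicit than the paper about charging the cost of the recursion, merge, and $\min$ operations to the subdomain count and about why the per-iteration $|S|$ factor and the iteration count are absorbed into the two spare powers — the paper compresses all of that into the remark that the value iteration is polynomial in the number of states — but the key bound and the overall argument are the same.
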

\begin{proof}
    By~(\ref{eqn:eval_travel}), the overall complexity is related to the number of subdomains, which is bounded by
    \begin{equation} \begin{split}
        |P^{k+1}_s| 
        &= \left| \{0\} \cup \bigcup_{s'} C_{ss'} \cup \bigcup_{s'} \frac{P^{k}_{s'} - C_{s s'}}{0} \right|   \\
        &\leq 1 + (|S| \cdot |C_m|) + (|S| \cdot |C_m|) \cdot |P^{k}_{s'}|    \\
        &\leq 1 + 2 (|S| \cdot |C_m|) + 2 (|S| \cdot |C_m|)^2 \\
        &+ \cdots + 2 (|S| \cdot |C_m|)^{k+1}   \\
        &= \mathcal{O}((|S| \cdot |C_m|)^{k+2})
        ,
    \end{split} \end{equation}    
    where~$C_m$ is the set of subdomains and constants with the maximum cardinality over all edges~$e \in E$. In the worst case, we find the optimal policy for each subdomain using value iteration in Alg.~\ref{algo:pseudoCode_optimal_policy}. Since the time complexity for solving such value iteration is polynomial in number of states, the overall time complexity for the proposed algorithm is~$\mathcal{O}((|S| \cdot |C_m|)^{k+2})$.
\end{proof}

\begin{remark} [Time-static reduction]
    From Theorem~\ref{theorem:complexity}, the time complexity for time-static edge functions is reduced to~$\mathcal{O}(|S|^2)$ (i.e., $|C_m|=1$ and~$K=0$) which agrees with the complexity of static shortest path problems.
\end{remark}

\begin{remark} [Time complexity in practice] \label{remark:complexity_practice}
    The time complexity in Theorem~\ref{theorem:complexity} is based on three worst-case conditions: 1) all states are connected to all the others, 2) the maximum number of iterations depends on the smallest edge subdomain and 3) no overlapping subdomains. 
\end{remark}
\noindent The worst-case conditions in Remark~\ref{remark:complexity_practice} occur rarely in practice, particularly Condition 3. When subdomains in a set overlap, they merge and form a much smaller set. Therefore the set does not grow indefinitely until convergence. 

By Theorem~\ref{theorem:complexity}, the complexity heavily depends on the number of edge transitions (i.e., iterations). Since the maximum number of edge transitions depends on the subdomains among all edge functions as shown in Lemma~\ref{lemma:conv_kmax}, the running time of the algorithm can be improved significantly by adaptively pruning early rapid changes to increase the denominator (i.e., $\min C$). Furthermore, we can also reduce the size of the edge functions by merging consecutive constants that are similar (i.e., reducing~$|C_m|$).

\begin{figure}
    \centering
    \includegraphics[width=0.6\columnwidth]{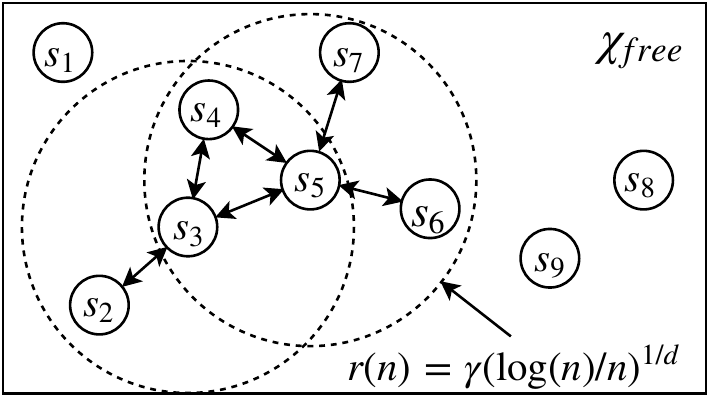}
    \caption{A partial construction of a PRM* graph. Note the formation of a cyclic graph in the radius intersection.}
    \label{fig:prmstar}
\end{figure}

\begin{figure*}[t]
    \centering
    \subfloat[At~$s_5$ and~$t=2$]{\includegraphics[width=0.35\columnwidth]{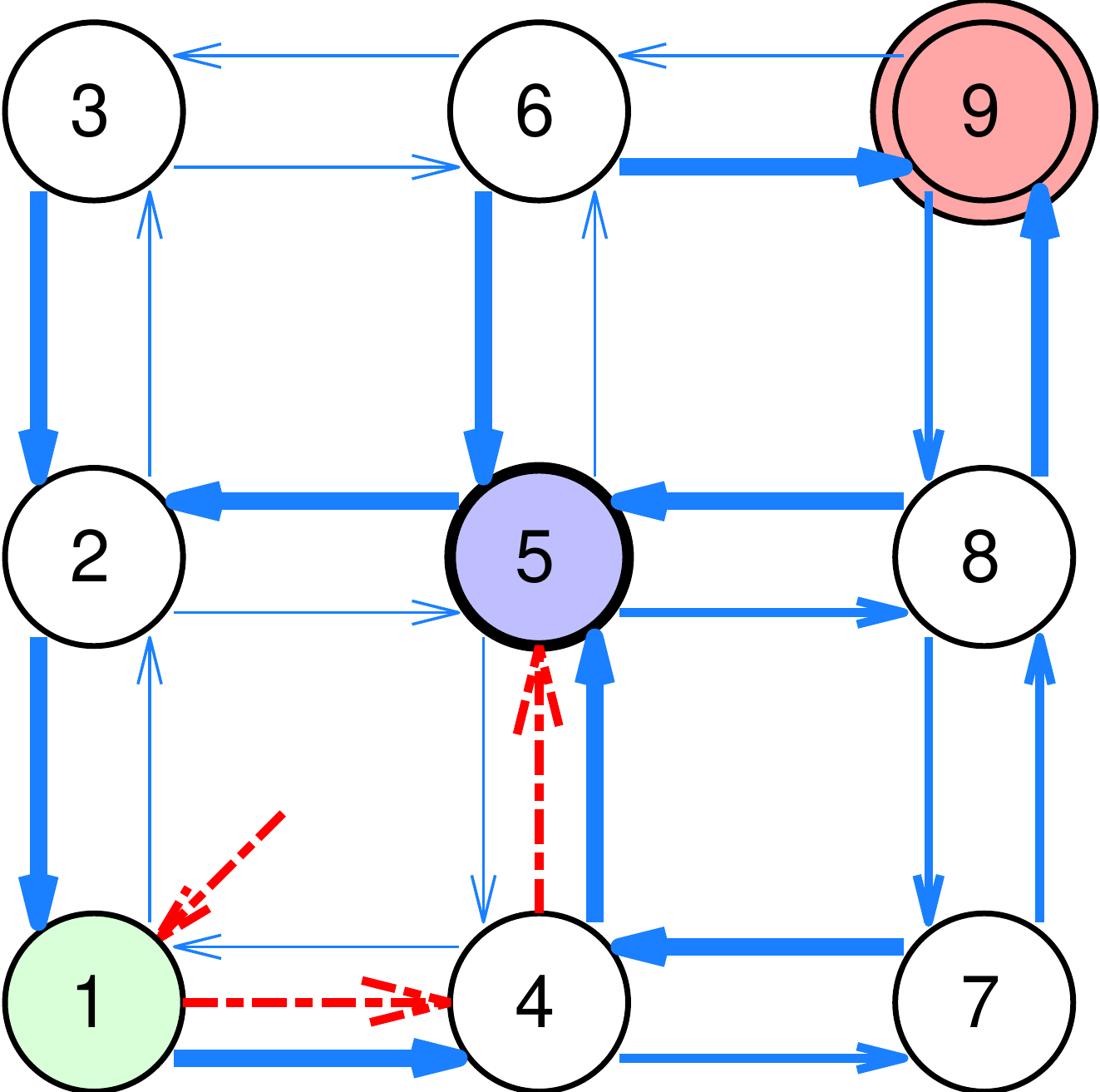} \label{subfig:GridCase_frame1}}
    \subfloat[At~$s_2$ and~$t=7$]{\includegraphics[width=0.35\columnwidth]{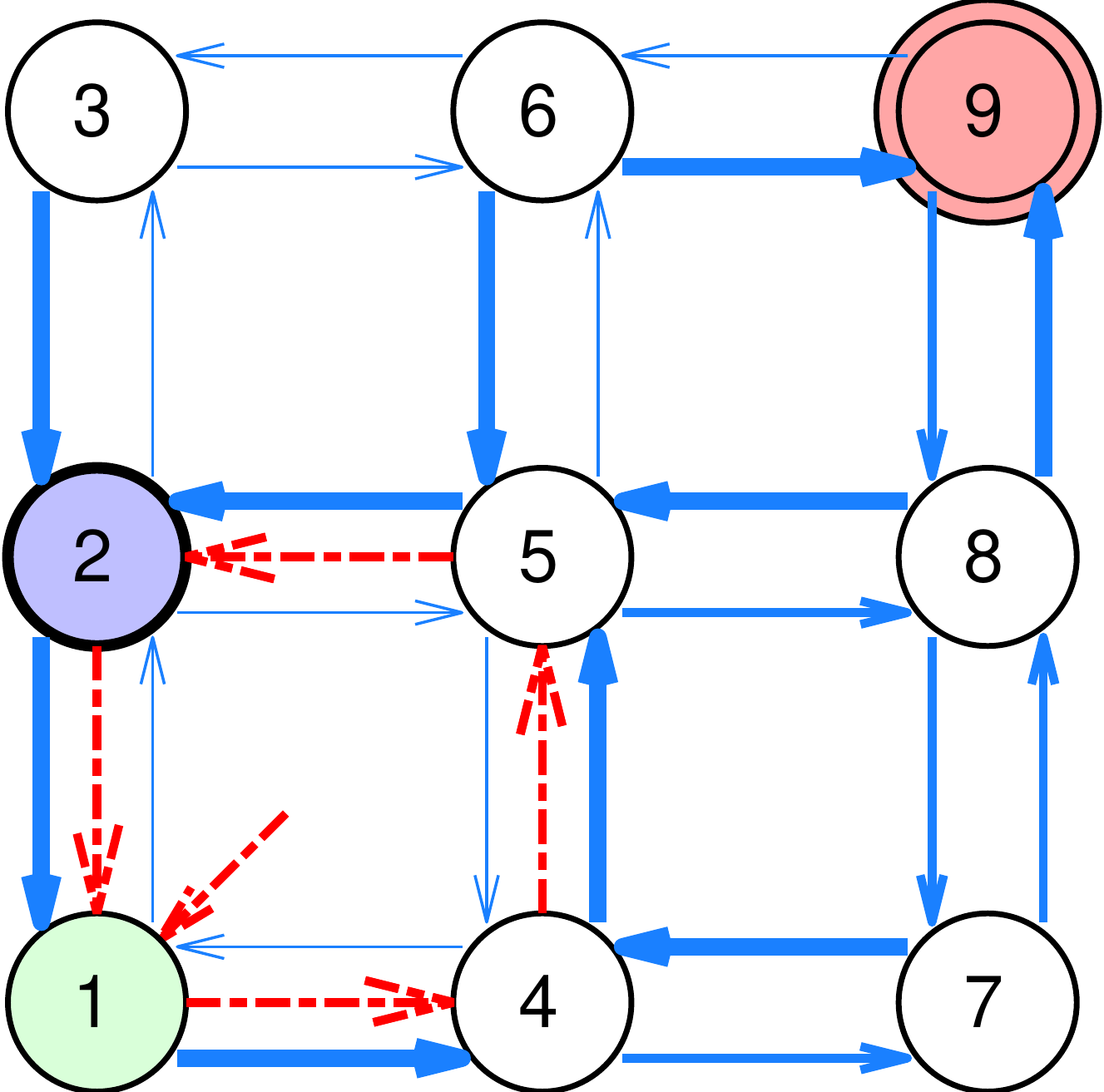} \label{subfig:GridCase_frame2}}
    \subfloat[At~$s_5$ and~$t=10$]{\includegraphics[width=0.35\columnwidth]{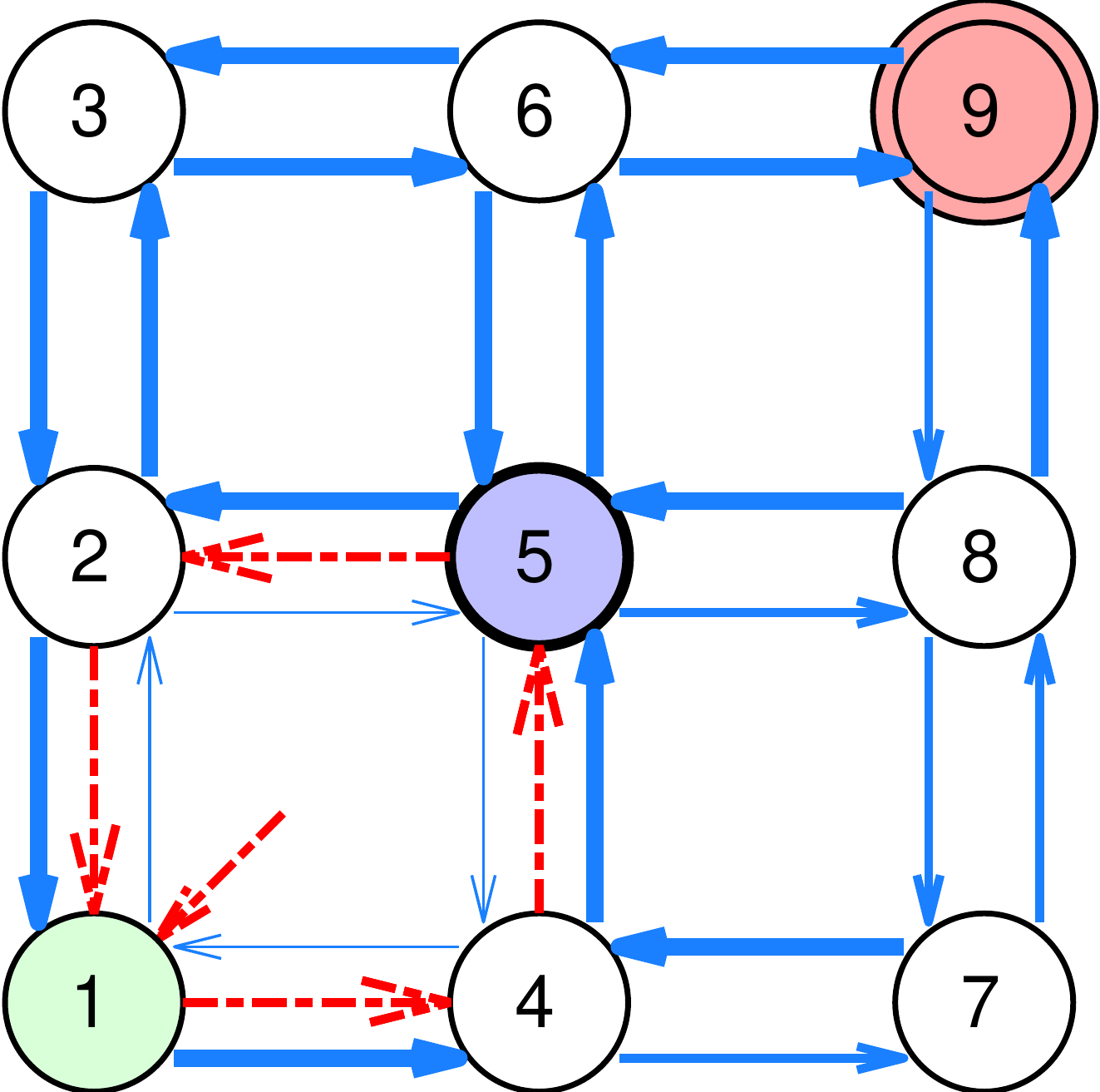} \label{subfig:GridCase_frame3}}
    \subfloat[At~$s_9$ and~$t=12$]{\includegraphics[width=0.35\columnwidth]{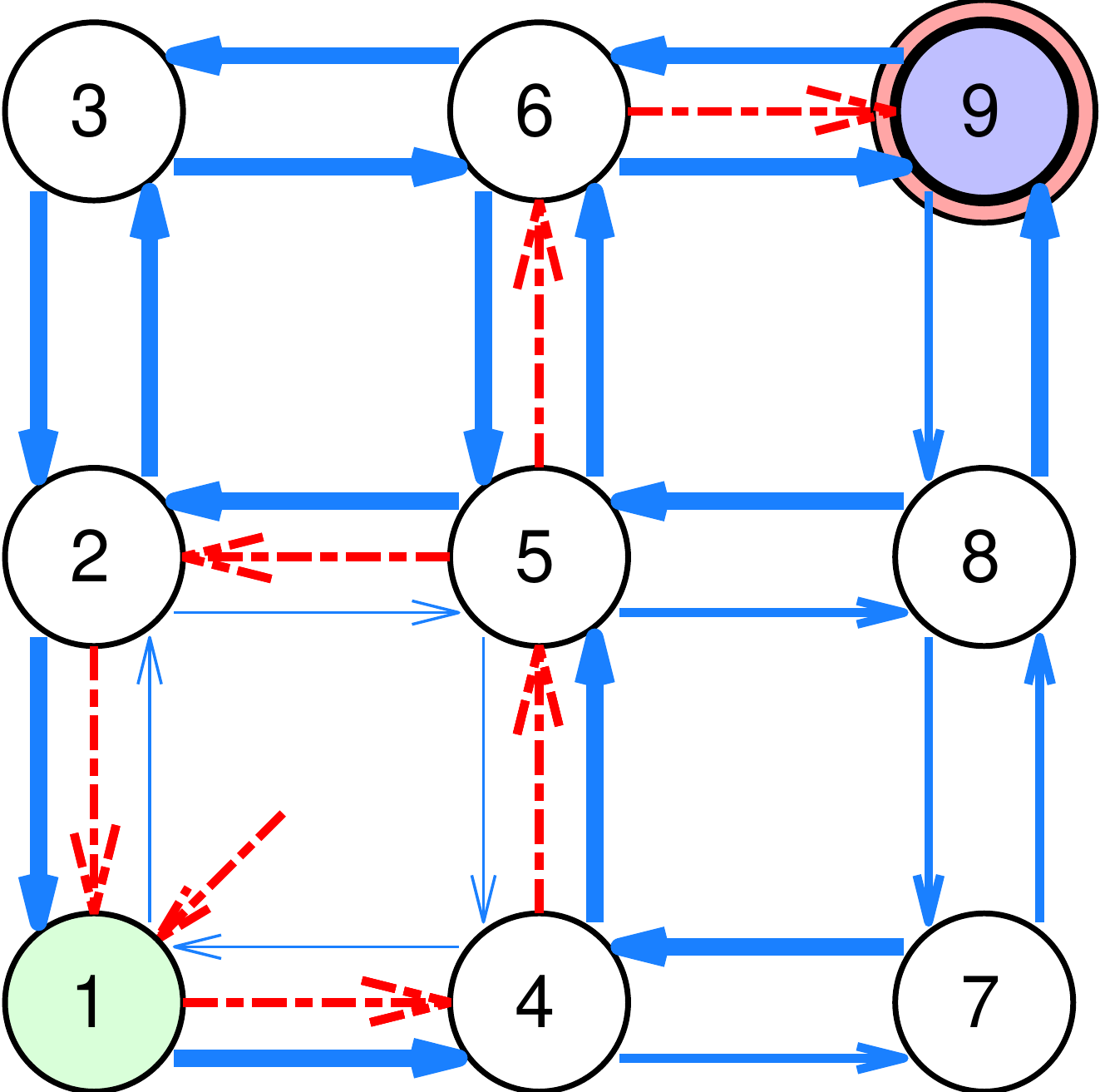} \label{subfig:GridCase_frame4}}
    \\
    \subfloat[Optimal travel time at~$s_1$]{\includegraphics[width=0.4\columnwidth]{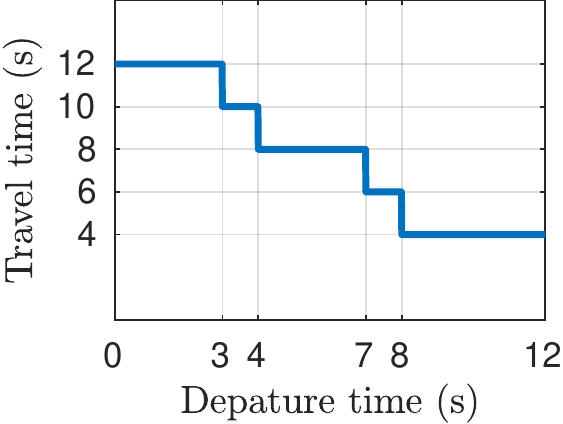} \label{fig:3x3_travel_s1}}
    \subfloat[Optimal travel time at~$s_2$]{\includegraphics[width=0.4\columnwidth]{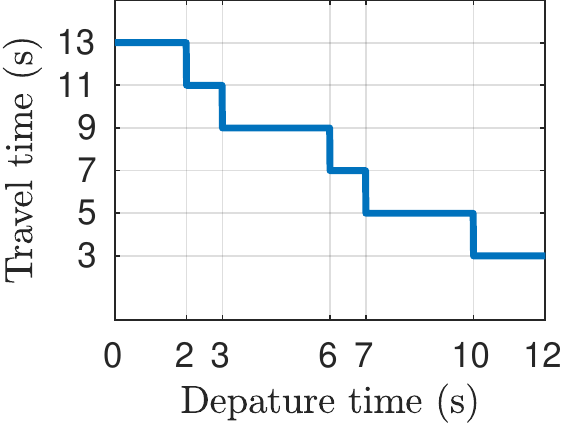} \label{fig:3x3_travel_s2}}
    \subfloat[Optimal travel time at~$s_5$]{\includegraphics[width=0.4\columnwidth]{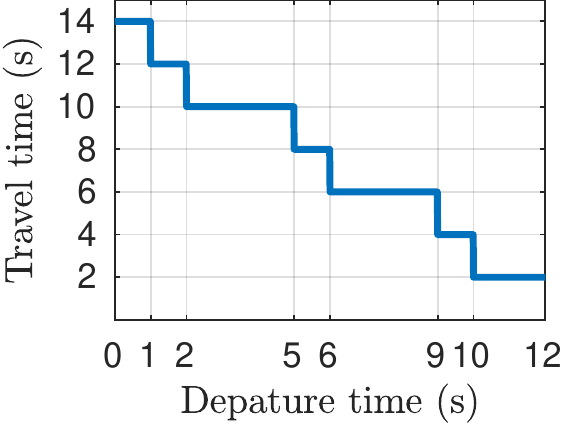} \label{fig:3x3_travel_s5}}
    \subfloat[Optimal travel policy at~$s_5$]{\includegraphics[width=0.4\columnwidth]{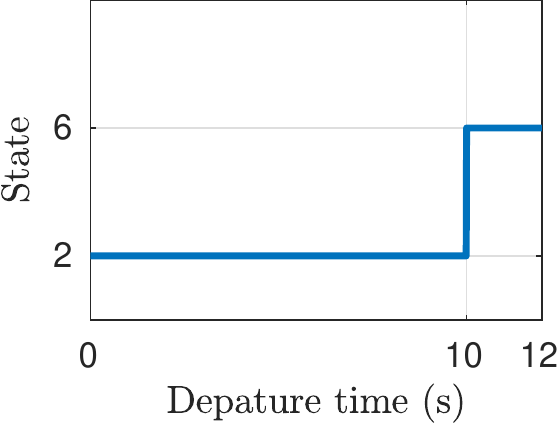} \label{fig:3x3_policy_s5}}
    
    \caption{3-by-3 example to reach~$s_9$. The red lines represent the optimal path starting from~$s_1$ to~$s_9$ and the width of blue lines represents the edge time for the corresponding edge where the thicker width illustrates shorter travel time. The current state is coloured in blue.}\label{fig:GridTimeStep}
\end{figure*}

\section{Optimal Planning Over Time-Dependent Flow} \label{sec:planning}

In this section, we propose a path planning framework for time-dependent flow fields. We present graph construction that guarantees asymptotic optimality. We then present how time-dependent edge time functions are derived for the graph.

\subsection{Time-dependent flow field and FIFO condition} \label{subsec:fifo_flow}


Suppose we have two points in a time-dependent flow field where a single control is found to traverse from one to the other.
The path at time~$t$ may be different to that at~$t' > t$. Since the path for departing later could yield shorter arrival time than departing earlier, time-dependent flow field does not respect FIFO.
As the distance between two points approaches zero, the non-FIFO property is weakened since the surrounding flow also approaches time-static constant flow. Therefore, time-dependent flow fields are asymptotically FIFO.


\subsection{Asymptotically optimal planning for TDSP problems} \label{subsec:AO_roadmap}

We use the \emph{probabilistic roadmap$^*$}~(\prm) algorithm to solve the TDSP problem by building a graph; this approach guarantees asymptotic optimality with respect to the number of states~\cite{star_Karaman2010}.
Partial construction of the PRM* graph is shown in Fig.~\ref{fig:prmstar}. The algorithm first samples~$N$ states in free space~$\chi_{free}$ and the states are connected to their neighbouring states within the connection radius~$r$. Each edge time is computed as shown in Sec.~\ref{subsec:flowfieldImp}.
The algorithm is guaranteed to provide asymptotic optimality when the radius~$r$ satisfies the following~\cite{star_Karaman2010}:
\begin{equation}\label{eqn:connectionRadius}
\begin{split}
    r > \gamma \cdot (\log{(N)}/N)^{1/d},
\end{split}
\end{equation}
where~$d$ is the dimension of free space~$\chi_{free}$.

Note that groups of states are connected in cycles where neighbouring regions overlap (i.e. states~$s_3$, $s_4$ and~$s_5$ in Fig.~\ref{fig:prmstar}). 
Therefore, the proposed algorithms can fully exploit cyclic edge connections.

\subsection{Time-dependent edge time functions from flow field}\label{subsec:flowfieldImp}

Suppose we have a vehicle~$R$ modelled as
$\dot{\mathbf{x}} = \mathbf{v}_R(\theta) + F(\mathbf{x}, t)$,
where~$\mathbf{x} \in \mathbb{R}^2$ is the position of the vehicle, $F(\mathbf{x}, t) = [u_{\mathbf{x}, t}, v_{\mathbf{x}, t}]^T$ is the time-dependent flow vector and $\mathbf{v}_R~\in \mathbb{R}^2$ is the vehicle velocity relative to the flow. The vehicle is controlled by varying the bearing angle~$\theta$ whilst travelling at constant speed~$V_{\max}$.
The discrete time model for the vehicle is represented as
\begin{equation} \label{eqn:discrete_model}
    \mathbf{x}[k+1] = \mathbf{x}[k] + (\mathbf{v}_R(\theta) + F(\mathbf{x}[k], t))\cdot \Delta t
    ,
\end{equation}
assuming the flow does not vary during~$\Delta t$.

Given two graph states~$s$ and~$s'$ that are located at~$\mathbf{x}_{s}$ and~$\mathbf{x}_{s'}$, respectively, we enumerate a set of control samples~$\Theta = \{\theta_0, \cdots\}$ (i.e., bearing angles) to compute the corresponding set of trajectories~$\mathbf{X} = \{\mathbf{x}^0, \cdots\}$ for a predefined time horizon~$h$. For a given control~$\theta_i$ and departure time~$t$, we start from~$\mathbf{x}^i[k] = \mathbf{x}_s$ until~$k = H$ using~(\ref{eqn:discrete_model}), where~$H = \text{ceil}(h / \Delta t)$ is discrete time horizon. Once the enumeration is completed, we find the trajectory~$\mathbf{x}^{i^*} \in \mathbf{X}$ that approaches closest to~$\mathbf{x}_{s'}$, such that
\begin{equation}
    i^* = \argmin_{i} \min_{k \leq H} \| \mathbf{x}_{s'} - \mathbf{x}^i[k] \|
    .
\end{equation}

The time taken for the trajectory to reach~$\mathbf{x}_{s'}$ is denoted as the edge time at departure time~$t$.
By Lemma~\ref{lemma:bound}, there exists a finite number of subdomains assuming that the flow forecast is also given in a form of piecewise-constant functions. 
This is a practically valid assumption as discussed in Sec.~\ref{sec:analysis}.

\begin{figure*}[t]
    \centering
    \subfloat[$t = 1.951$ departing at~$t_0 = 0$ (total travel time = $42.599$) ]{\includegraphics[width=0.9\columnwidth]{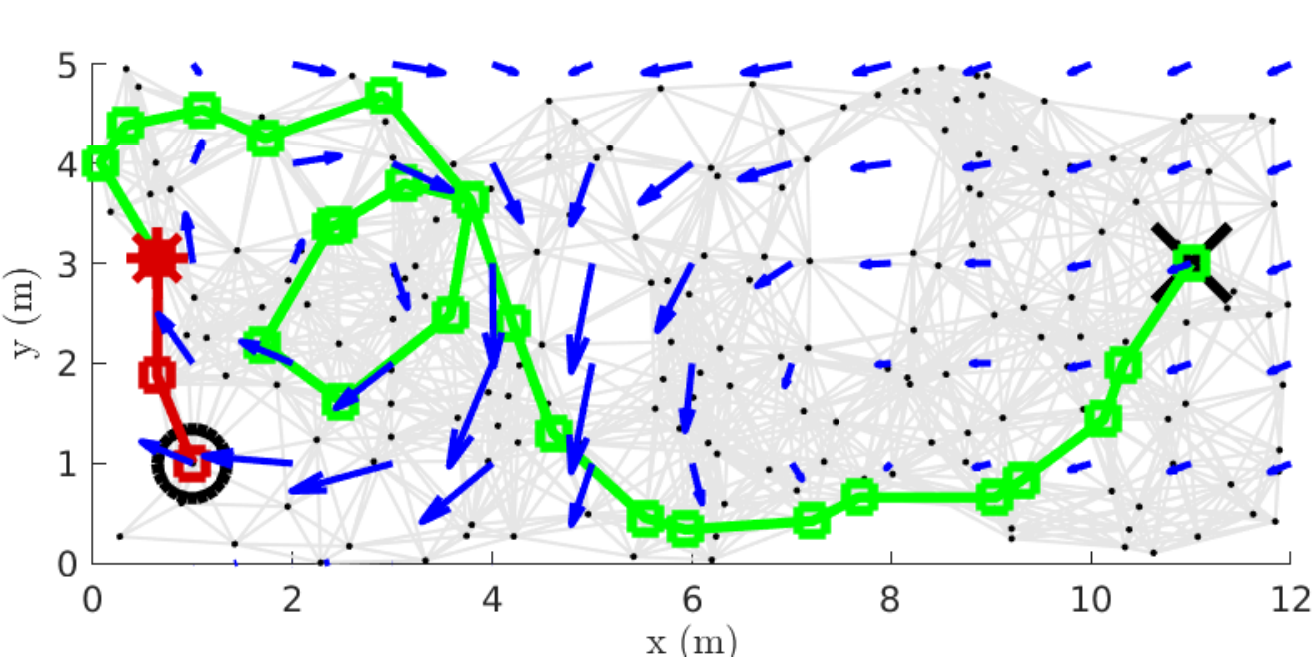} \label{subfig:flowfieldLargeCase_frame1}}
    \subfloat[$t = 10.916$ departing at~$t_0 = 0$ (total travel time = $42.599$)]{\includegraphics[width=0.9\columnwidth]{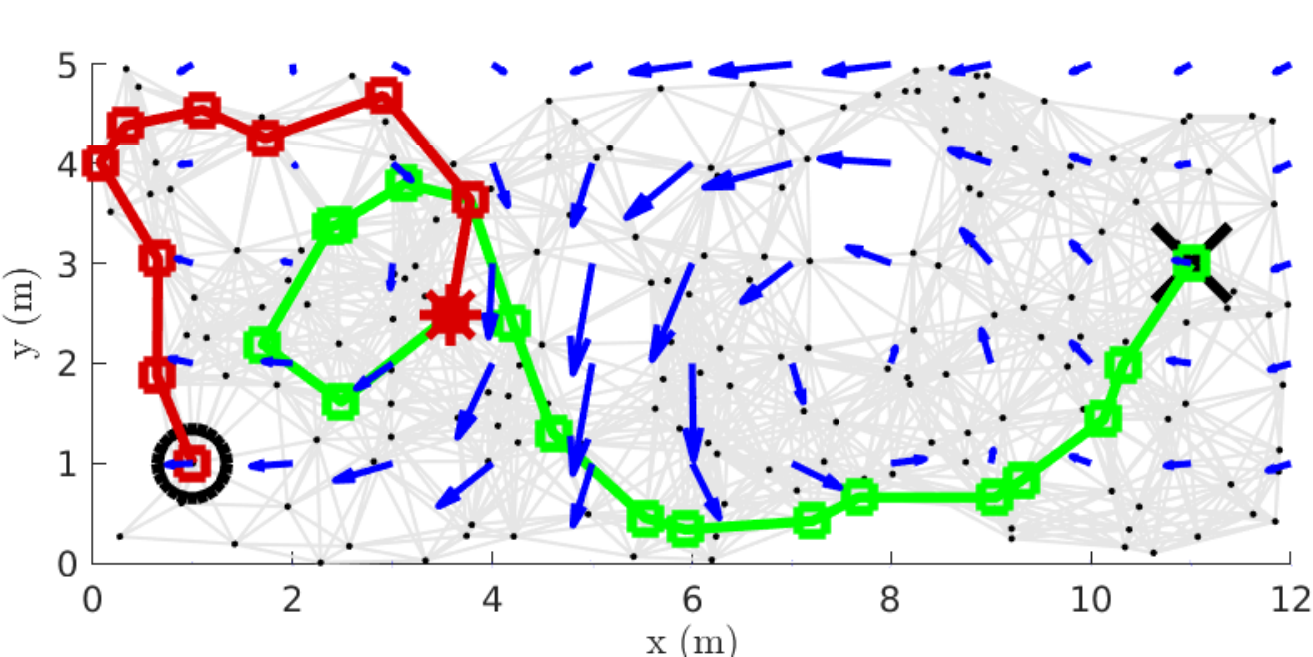} \label{subfig:flowfieldLargeCase_frame2}} \\
    \subfloat[$t = 29.684$ departing at~$t_0 = 0$ (total travel time = $42.599$)]{\includegraphics[width=0.9\columnwidth]{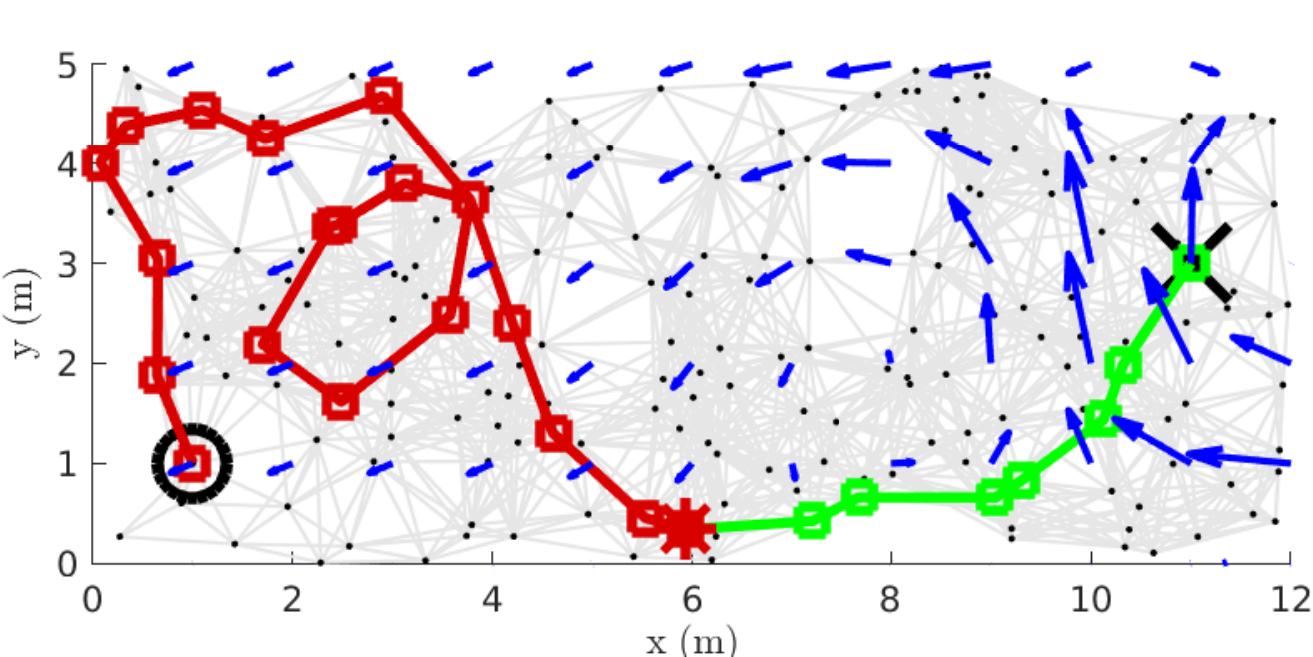} \label{subfig:flowfieldLargeCase_frame3}}
    \subfloat[$t = 42.599$ departing at~$t_0 = 0$ (total travel time = $42.599$)]{\includegraphics[width=0.9\columnwidth]{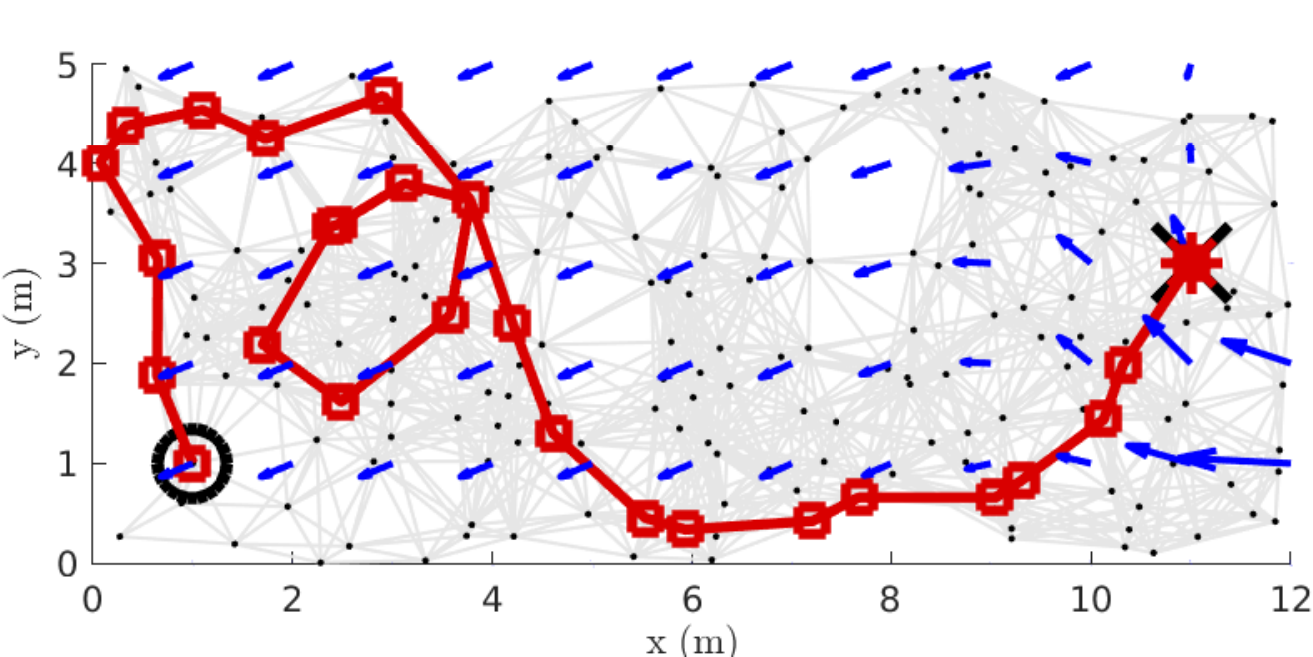} \label{subfig:flowfieldLargeCase_frame4}} \\
    \subfloat[$t = 31.8243$ departing at~$t_0 = 12$ (total travel time = $34.8365$)]{\includegraphics[width=0.9\columnwidth]{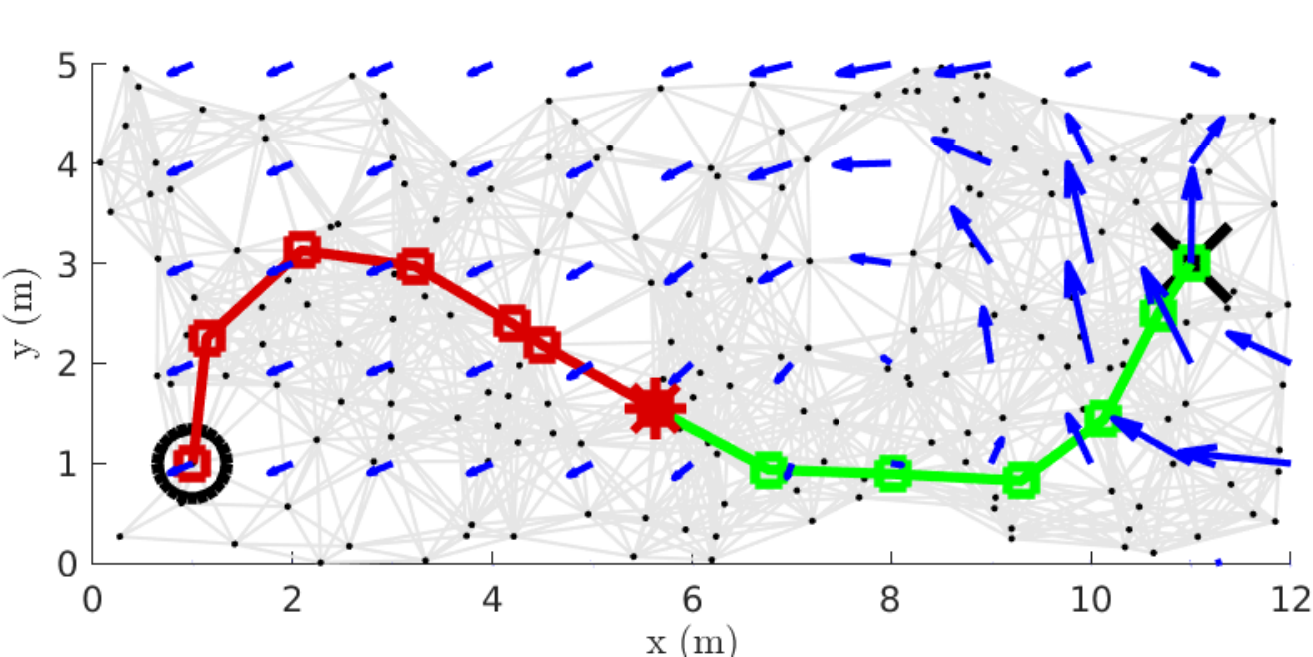} \label{subfig:flowfieldLargeCase_12}}
    \subfloat[$t = 40.4315$ departing at~$t_0 = 25$ (total travel time = $36.2631$)]{\includegraphics[width=0.9\columnwidth]{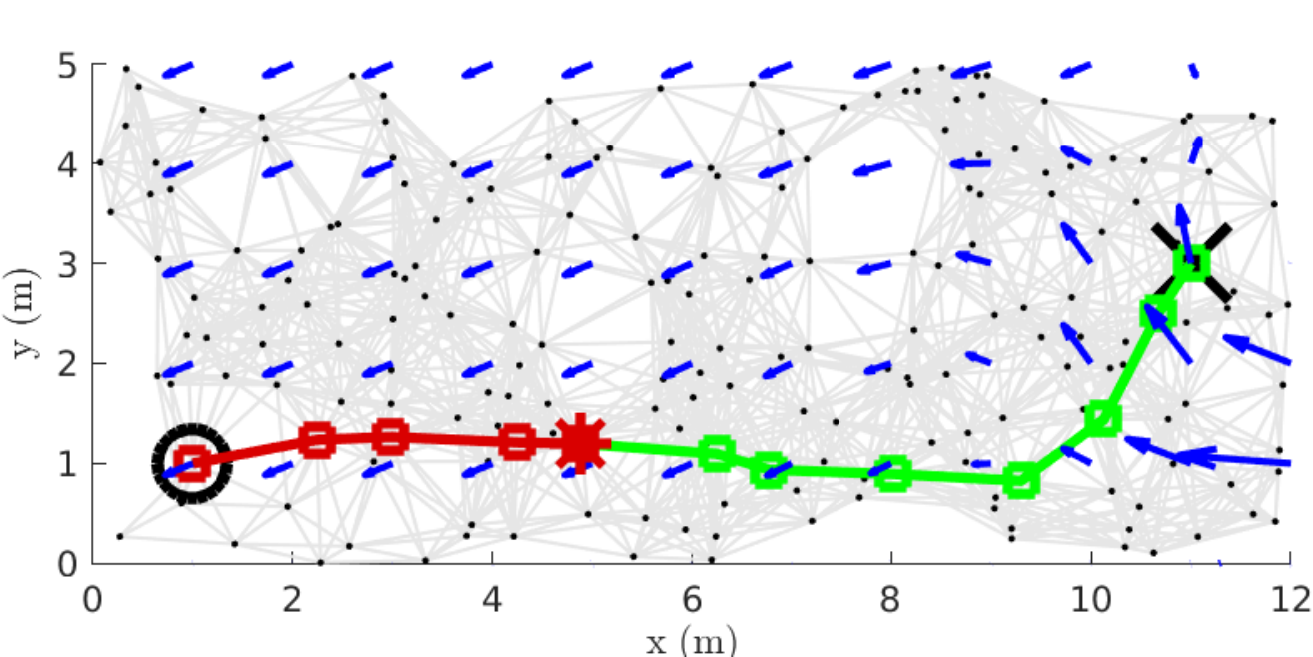} \label{subfig:flowfieldLargeCase_25}}
    \caption{Navigating at different initial departure time~$t_0 = 0$, $12$ and~$25$ through time-dependent flow field from start (bottom left) and to destination (top right). The red line is the path prefix and the green is the suffix, where~$t$ is the travel time for the prefix since the departure~$t_0$. The red asterisk is the current location. Blue arrows represent the flow vector at a given position. The \prm nodes and edges are shown in black.}
    \label{fig:ComplexFlowTimeStep}
\end{figure*}

\section{Examples}\label{sec:experiment}

In this section, we present two simulated examples where local travel time varies with departure time. 
We first demonstrate a discrete case with a discrete graph where we find an optimal policy and the corresponding path given initial departure time. Next, we present a flow field scenario where we sample a graph over a continuous flow field and solve for the path planning problem in an asymptotically optimal manner.
The algorithm was run on a standard laptop with Intel i5-6300 2.5GHz CPU and 8GB RAM.

\subsection{3-by-3 grid}

We consider a graph with 9 states as shown in Fig.~\ref{fig:GridTimeStep}, where we aim to reach state~$s_9$ from state~$s_1$. The width of each edge line (in blue) illustrates the corresponding edge time; the thickest edge width represents an edge time of~$1$ while that for the thinnest is~$25$. The current state is coloured blue.

In Fig.~\ref{fig:GridTimeStep}, the initial departure time is~$t_0 = 0$, when travel along edges in the direction of the goal state is  expensive (Fig.~\ref{subfig:GridCase_frame1}). The edge costs are relaxed later in time (Fig.~\ref{subfig:GridCase_frame4}). A policy derived using algorithms in ~Sec.~\ref{sec:synthesis} results in a path with optimal travel time~$\Gamma^* = s_1 s_4 s_5 s_2 s_1 s_4 s_5 s_2 s_1 s_4 s_5 s_6 s_9$,
which is drawn in red. 

The path starts off with two cycles through states~$\{s_1, s_2, s_4, s_5\}$ and then reaches the goal state via state~$s_6$. The optimal travel time for the path is~$12$ as shown in Fig.~\ref{fig:3x3_travel_s1}. State~$s_5$ plays an important role in the optimal policy, deciding when to move towards the goal state, while others' actions are greedily chosen.
The travel policy for states~$s_1$, $s_2$ and~$s_4$ is to move in a cycle regardless of departure time, whereas the policy for state~$s_5$ (Fig.~\ref{fig:3x3_policy_s5}) is to transit to~$s_6$ when the time is greater than~$10$.
The optimal solution was found after 14 iterations and the running time was~$0.418$ seconds to converge.


\subsection{Asymptotically optimal path planning in flow field}

We generated a time-dependent flow field using the Taylor-Green gyre vortex model~\cite{Taylor1937}, which is commonly used to model ocean currents as shown in Fig.~\ref{fig:ComplexFlowTimeStep} (in blue). As described in Sec.~\ref{sec:planning}, we randomly sampled 200 states over the space and used a connection radius of~$r = 1.735$ ~(\ref{eqn:connectionRadius}) to connect them. Sampled states and edges are shown in black.

The vehicle starts from the bottom left corner (circle) aiming to reach the top right corner (cross). Figures~\ref{subfig:GridCase_frame1}-\ref{subfig:GridCase_frame4} illustrate its progress over the optimal path for initial departure time~$t_0 = 0$ and the flow evolution over time. The red line represents the trajectory followed up to time~$t$ and the green line is the remaining path. The total travel time is~$42.599$. The vehicle spends time on the left side until the flow in the middle weakens. Then it moves towards the goal state against a weak opposing flow.
The optimal travel policy was found after 44 iterations and the running time was 583 seconds.

In Fig.~\ref{subfig:flowfieldLargeCase_12} and~\ref{subfig:flowfieldLargeCase_25}, we demonstrate optimal paths for two different initial departure times,~$t_0 = 12$ and~$25$, respectively. Since the flows are more relaxed than departing at~$t_0 = 0$, the overall travel times are reduced although the arrival times are later. 
It is important to note that optimising travel time for ocean vehicles is more valuable than optimising arrival time when their endurance is dominated by travel time, as is often the case. We have shown that our proposed framework can find the optimal departure time that minimises the overall travel time in one such example.



\section{Conclusion and Future Work}\label{sec:conclusion}
We have presented a new algorithm for finding shortest paths in time-dependent graphs and have shown how it can be combined with sampling-based methods for planning in time-dependent flow fields. This result expands the set of known polynomial-time special cases for TDSP to include non-FIFO problems with piecewise-constant edge travel time functions. Previously, special cases with restricted slopes were restricted to FIFO problems. Our result also provides a new practical solution for planning in flow fields, with known performance bounds, that is applicable to autonomous vehicles in the ocean~\cite{cadmus2019streamlines,lee2018active}. An important avenue of future work is to develop efficient algorithms with performance guarantees for stochastic cases where flow field velocities and vehicle control are uncertain. It is also important to explore further special cases in non-FIFO TDSP problems that may be solved in polynomial time.

\section*{Acknowledgement}
This work is supported by an Australian Government Research Training Program (RTP) Scholarship, Australia's Defence Science and Technology Group and the University of Technology Sydney.

\bibliographystyle{abbrvnat}
\bibliography{references}

\end{document}